%% file: main.tex
\documentclass{article}

\usepackage{microtype}
\usepackage{graphicx}
\usepackage{subcaption}
\usepackage{booktabs} % for professional tables

\usepackage{hyperref}

\usepackage[preprint]{icml2026}

\usepackage{placeins}

\usepackage{float}
\usepackage{makecell}
\usepackage{pythonhighlight}
\usepackage{listings}
\usepackage{amssymb}
\usepackage{graphicx}
\usepackage{amsmath}
\usepackage{amsthm}
\usepackage{amsfonts}
\usepackage{multirow}
\usepackage{makecell}
\usepackage{colortbl}
\usepackage{enumerate}
\usepackage{enumitem}
\usepackage{wrapfig}

\usepackage[capitalize,noabbrev]{cleveref}

\usepackage{multirow}
\usepackage{multicol}

\theoremstyle{plain}
\newtheorem{theorem}{Theorem}[section]

\newtheorem{lemma}[theorem]{Lemma}
\newtheorem{corollary}[theorem]{Corollary}
\theoremstyle{definition}

\theoremstyle{remark}

\newcommand{\mcL}{\mathcal{L}}
\newcommand{\mcD}{\mathcal{D}}

\newcommand{\TO}{\textbf{to}}

\usepackage[textsize=tiny]{todonotes}

\icmltitlerunning{Distributional Reinforcement Learning with Diffusion Bridge Critics}

\begin{document}

\twocolumn[
  \icmltitle{Distributional Reinforcement Learning with Diffusion Bridge Critics}

  % It is OKAY to include author information, even for blind submissions: the
  % style file will automatically remove it for you unless you've provided
  % the [accepted] option to the icml2026 package.

  % List of affiliations: The first argument should be a (short) identifier you
  % will use later to specify author affiliations Academic affiliations
  % should list Department, University, City, Region, Country Industry
  % affiliations should list Company, City, Region, Country

  % You can specify symbols, otherwise they are numbered in order. Ideally, you
  % should not use this facility. Affiliations will be numbered in order of
  % appearance and this is the preferred way.
  \icmlsetsymbol{equal}{*}
  \icmlsetsymbol{Corresponding author}{$\dagger$}

  \begin{icmlauthorlist}
    \icmlauthor{Shutong Ding}{shkj,inst,equal}
    \icmlauthor{Yimiao Zhou}{shkj,inst,equal}
    \icmlauthor{Ke Hu}{shkj,inst}
    \icmlauthor{Mokai Pan}{shkj,inst}
    \icmlauthor{Shan Zhong}{uestc} \\
    \icmlauthor{Yanwei Fu}{fdu}
    \icmlauthor{Jingya Wang}{shkj,inst}
    %\icmlauthor{}{sch}
    \icmlauthor{Ye Shi}{shkj,inst,Corresponding author}
    % \icmlauthor{Firstname8 Lastname8}{shkj,comp}
    %\icmlauthor{}{sch}
    %\icmlauthor{}{sch}
  \end{icmlauthorlist}

  \icmlaffiliation{shkj}{ShanghaiTech University}
  \icmlaffiliation{inst}{InstAdapt}
  \icmlaffiliation{uestc}{University of Electronic Science and Technology of China}
  \icmlaffiliation{fdu}{Fudan University}

  \icmlcorrespondingauthor{Ye Shi}{shiye@shanghaitech.edu.cn}
  % \icmlcorrespondingauthor{Firstname2 Lastname2}{first2.last2@www.uk}

  % You may provide any keywords that you find helpful for describing your
  % paper; these are used to populate the "keywords" metadata in the PDF but
  % will not be shown in the document
  \icmlkeywords{Machine Learning, ICML}

  \vskip 0.3in
]

% this must go after the closing bracket ] following \twocolumn[ ...

% This command actually creates the footnote in the first column listing the
% affiliations and the copyright notice. The command takes one argument, which
% is text to display at the start of the footnote. The \icmlEqualContribution
% command is standard text for equal contribution. Remove it (just {}) if you
% do not need this facility.

% Use ONE of the following lines. DO NOT remove the command.
% If you have no special notice, KEEP empty braces:
\printAffiliationsAndNotice{}  % no special notice (required even if empty)
% Or, if applicable, use the standard equal contribution text:
% \printAffiliationsAndNotice{\icmlEqualContribution}

\begin{abstract}
Recent advances in diffusion-based reinforcement learning (RL) methods have demonstrated promising results in a wide range of continuous control tasks. However, existing works in this field focus on the application of diffusion policies while leaving the diffusion critics unexplored. In fact, since policy optimization fundamentally relies on the critic, accurate value estimation is far more important than the policy expressiveness. Furthermore, given the stochasticity of most reinforcement learning tasks, it has been confirmed that the critic is more appropriately depicted with a distributional model. Motivated by these points, we propose a novel distributional RL method with Diffusion Bridge Critics (DBC). DBC directly models the inverse cumulative distribution function (CDF) of the Q value. This allows us to accurately capture the value distribution and prevents it from collapsing into a trivial Gaussian distribution owing to the strong distribution-matching capability of the diffusion bridge. Moreover, we further derive an analytic integral formula to address discretization errors in DBC, which is essential in value estimation. To our knowledge, DBC is the first work to employ the diffusion bridge model as the critic. Notably, DBC is also a plug-and-play component and can be integrated into most existing RL frameworks. Experimental results on MuJoCo robot control benchmarks demonstrate the superiority of DBC compared with previous distributional critic models.
\end{abstract}

\section{Introduction}

Diffusion models have recently been applied to reinforcement learning and demonstrate their superior performance in robot locomotion control and manipulation tasks~\cite{ren2024diffusion, yang2023policy, janner2022planning, wang2022diffusion, ajay2022conditional, chi2023diffusion} for their multimodality and powerful exploration capability. However, existing diffusion-based RL methods~\cite{yang2023policy, ding2024diffusion, ding2025genpo, celik2025dime, wang2024diffusion, psenka2023learning} principally focus on leveraging diffusion models to enhance the policy expressiveness, while overlooking the potential of diffusion on modeling the distributional critic. In fact, the critic plays a more important role in RL, as policy optimization are driven by the values or gradients estimated by the critic model.

% In practice, however, the critic plays a more fundamental role in policy optimization: the quality of policy updates is ultimately bounded by the accuracy and reliability of value estimation. Even highly expressive policies may fail to improve if guided by biased or miscalibrated critics. This mismatch between policy expressiveness and critic capacity becomes particularly problematic in modern reinforcement learning settings where policies are stochastic, multimodal, and trained under partial observability or noisy dynamics.

% , due to the inherent stochasticity of reinforcement learning environments, state–action values are more naturally described by a distribution rather than a single expectation~\cite{bellemare2017distributional}. Distributional reinforcement learning has shown that modeling the full return distribution can significantly improve both stability and performance, motivating a variety of approaches such as categorical critics, quantile regression, and implicit distributions~\cite{bellemare2017distributional}. However, these methods typically rely on either discrete support approximations or fixed parametric forms, which can limit their ability to capture complex, heavy-tailed, or multi-modal value distributions.
Distributional critics~\cite{bellemare2017distributional} were originally proposed to better capture the inherent stochasticity in RL, thereby yielding more accurate value estimates for policy training. The core idea of existing works~\cite{kuznetsov2020controlling, dabney2018distributional} in this area is to represent the Q-value distribution using finite discrete quantiles to approximate the Q-values at each quantile with neural networks. Although these methods can improve Q-value estimation to some extent, it remains restricted by the discrete quantile representation and the limited expressiveness of neural networks. Besides, recent works such as ~\cite{hu2025value, zhong2025flowcritic} also attempt to apply diffusion models to depict the Q-value distribution for its expressiveness. However, under the diffusion learning paradigm, the Bellman backup operator induces an unbounded accumulation of approximation errors, causing the diffusion critic to eventually collapse into a trivial Gaussian distribution.

% have explored fitting Q-value distributions using diffusion processes. While promising, directly regressing scalar Q values or modeling return samples with diffusion models often leads to degenerate solutions that implicitly assume Gaussianity or suffer from discretization errors introduced by finite diffusion steps. These issues restrict the expressiveness of diffusion-based critics and undermine their potential advantages over classical distributional methods.

To overcome these issues and obtain more accurate value estimation, we propose Diffusion Bridge Critics (DBC), a novel distributional RL method that leverages expressive diffusion bridge models to depict the Q-value distribution. Concretely, DBC learns the inverse cumulative distribution function (inverse CDF) of the Q-value distribution. This formulation aligns naturally with quantile-based distributional reinforcement learning, while avoiding discrete quantile approximations and the Gaussian degradation problem in vanilla diffusion critics. Besides, we also find that a key technical challenge in diffusion bridges for policy optimization lies in discretization errors arising from finite-time approximations. To address this challenge, we derive an analytic integral form that corrects the bias induced by discrete diffusion steps, resulting in more accurate value estimation and stable policy improvement. Notably, DBC is a plug-and-play approach, which can be seamlessly integrated into existing RL algorithms without modifying the underlying policy architecture or optimization procedure. To summarize, our contribution is threefold:
\begin{itemize}[leftmargin=4pt, rightmargin=4pt]

\item \textbf{Gaussian Degradation of Vanilla Diffusion Critic.} We identify and analyze the issue of directly employing diffusion models as critics in reinforcement learning, which leads to the degradation of itself to trivial Gaussian distributions with the Bellman backup operator. Consequently, it severely limits the expressiveness of the learned value distribution and undermines effective policy optimization.

\item \textbf{Diffusion Bridge as Distributional Critic.}
We propose Diffusion Bridge Critics (DBC), a novel distributional RL method that employs diffusion bridge models to accurately learn the inverse cumulative distribution function (CDF) of the Q-value distribution. With the anchor loss design and integral-consistent discretization technique, DBC is the first success that effectively incorporates diffusion bridge critics into distributional RL.

\item \textbf{State-of-the-art Performance.}
We evaluate DBC combined with SAC and TD3 on a range of MuJoCo robotic control tasks and compare it with other representative distributional RL methods. Results demonstrate that DBC consistently achieves state-of-the-art performance, validating the effectiveness of diffusion bridge critics and highlighting the importance of an expressive diffusion-based critic in reinforcement learning.
\end{itemize}

\section{Related Works}

% \textbf{Diffusion Bridge Models.} Diffusion bridge models have achieved significant success across a range of applications, including image translation \cite{liu20232, li2023bbdm, zhou2023denoising}, image restoration \cite{luo2023image, yue2024image, UniDB, IRBridge}, and video generation \cite{wang2024framebridge}, which overcome the limitation of Gaussian prior in standard diffusion models. Early methodologies employed Schrodinger Bridges \cite{shi2023diffusion, somnath2023aligned, tang2024simplified, gushchin2024adversarial, nobis2025fractional, zhang2025voicebridge} through direct optimization of the Kullback-Leibler (KL) divergence, stochastic interpolants \cite{albergo2023stochastic}, Doob's \textit{h}-transform \cite{zhou2023denoising, yue2024image}, and Stochastic Optimal Control (SOC) theory \cite{DBFS, UniDB} to effectively model stochastic processes that connect arbitrary pairs of distributions. 
% This work pioneers the use of diffusion bridge models as critics in reinforcement learning. The approach offers an explicit mechanism for bridging the initial distribution and the target distribution, enhancing distributional value estimation.

\textbf{Diffusion-based Reinforcement Learning.} Existing diffusion-based RL methods mostly focus on the technique of diffusion policy optimization. In offline RL, diffusion models~\cite{janner2022planning, wang2022diffusion, ajay2022conditional} are employed to approximate the behavior policy in the offline dataset and enable policy optimization by sampling actions or trajectories that remain within the support of the diffusion model. In off-policy reinforcement learning, DIPO~\cite{yang2023policy} and QSM~\cite{psenka2023learning} optimize the diffusion policy by utilizing the gradient of the Q function. DACER~\cite{wang2024diffusion} and DIME~\cite{celik2025dime} treat the diffusion policy as a black-box model and directly apply deterministic policy gradient with different techniques on entropy estimation. Besides, QVPO~\cite{ding2024diffusion}, (SDAC)~\cite{ma2025efficient} perform the policy improvement with various choices on the weighted loss. Moreover, practical techniques for fine-tuning pretrained diffusion policies have also been proposed in DPPO~\cite{ren2024diffusion}. Furthermore, FPO~\cite{mcallister2025flow} and GenPO~\cite{ding2025genpo} develop different tricks to approximate the log likelihood of the diffusion policy and try to optimize the diffusion policy with PPO loss~\cite{schulman2017proximal}. However, existing diffusion-based RL methods mostly focus on the diffusion policy and overlook how to effectively utilize the diffusion model as a powerful critic. In that case, our DBC is proposed to tackle the absence of efficient diffusion-based critics.

\textbf{Distributional Critic Learning.}
 % Diffusion and flow matching models have emerged as a promising direction for value function learning in distributional reinforcement learning, offering expressive alternatives to traditional parametric assumptions. Early explorations employed diffusion models to capture multimodal value distributions \cite{hu2025value, zhong2025mad3pg}, as well as consistency models for uncertainty-aware Q-value estimation \cite{zhang2024q}. More recently, flow matching has provided an alternative pathway, enabling iterative Q-function computation via velocity fields \cite{agrawalla2025floq} and generative modeling of return distributions \cite{zhong2025flowcritic, chen2025unleashing, dong2025value}. 
 % However, these methods either rely on iterative sampling for scalar value estimation or introduce implicit distributional assumptions when generating return samples. In contrast, our method leverages diffusion bridges to learn the inverse cumulative distribution function, enabling explicit value distribution modeling within a unified quantile framework.
Distributional reinforcement learning aims to learn the full probability distribution of value functions rather than merely estimating their expectations \cite{bellemare2017distributional, drlbook}. Early methods primarily rely on discretization or parametric assumptions. C51 discretizes the Q-value support into a fixed set of atoms and learns a categorical distribution \cite{bellemare2017distributional}. QR-DQN learns discrete quantiles via quantile regression \cite{dabney2018distributional}, and IQN further introduces implicit quantile networks that map arbitrary probability levels to corresponding Q-values, enabling more flexible distributional representations \cite{dabney2018implicit}. For continuous control, DSAC imposes Gaussian assumptions on value distributions within the SAC framework \cite{dsac, dsac2}, while TQC improves stability through ensembles of quantile critics \cite{kuznetsov2020controlling}. Despite their effectiveness, these methods remain constrained by discrete supports or fixed parametric forms, limiting their capacity to capture complex value distribution structures. Very recently, diffusion and flow matching models have emerged as more expressive alternatives for value distribution learning. Diffusion models learn multimodal value distributions through iterative denoising \cite{hu2025value, zhong2025mad3pg}, consistency models enable efficient uncertainty estimation via single-step sampling \cite{zhang2024q}, and flow matching approaches either compute Q-functions iteratively through learned velocity fields \cite{agrawalla2025floq} or directly model return distributions generatively \cite{zhong2025flowcritic, chen2025unleashing, dong2025value}.

Nevertheless, existing generative methods either rely on iterative sampling for scalar value estimation or introduce implicit distributional assumptions when generating return samples. In contrast, our method leverages diffusion bridges to learn the inverse cumulative distribution function, enabling explicit value distribution modeling within a unified quantile framework.

\begin{figure*}[ht]
    \centering
    \includegraphics[width=\linewidth]{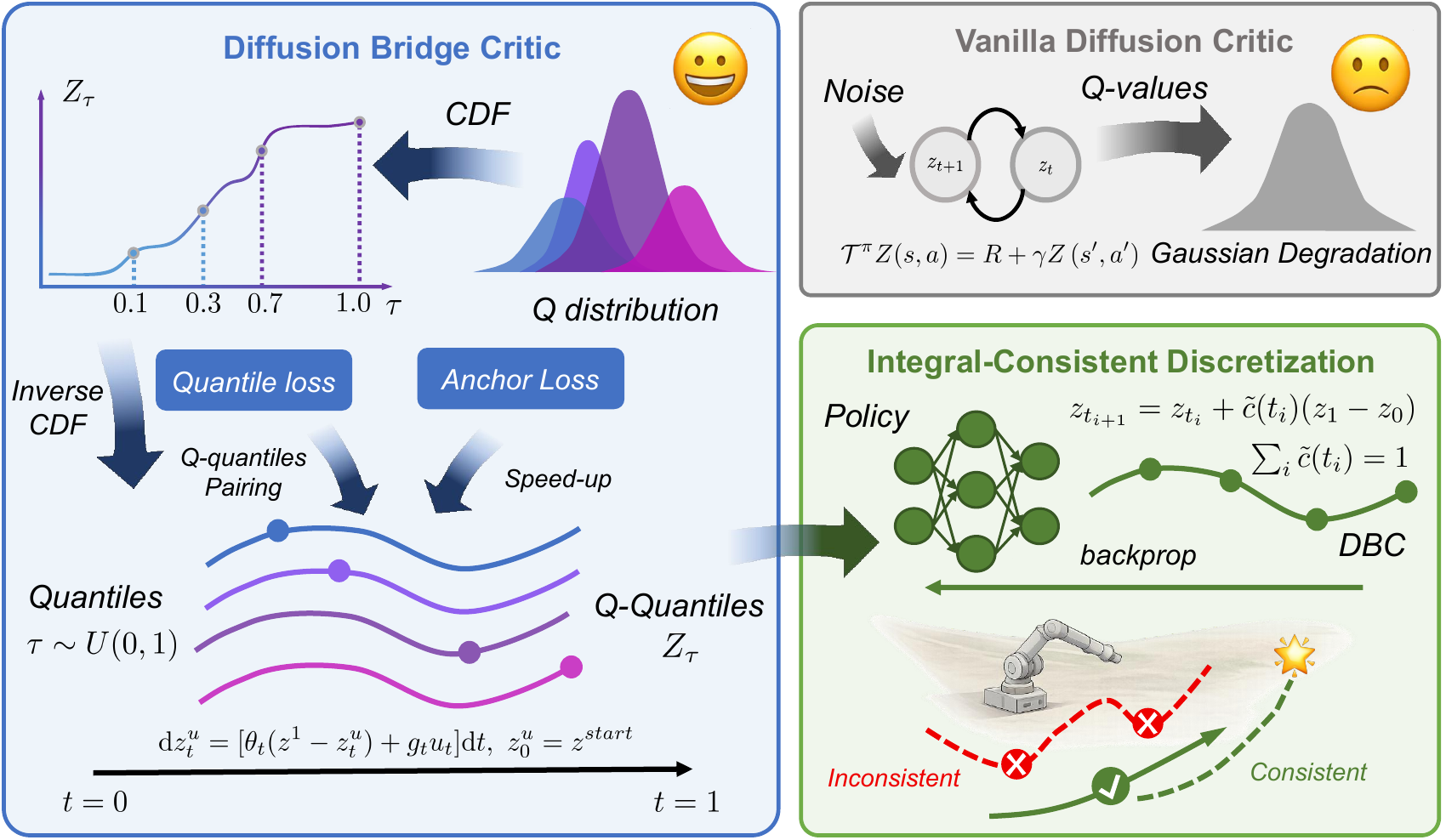}
    \caption{The training pipeline of Diffusion Bridge Critics. Compared with Vanilla Diffusion Critic, DBC explicitly models the inverse cumulative distribution function (CDF) of Q-values and resolves the Gaussian Degradation Problem. Besides, the design of the integral-consistent discretization technique is also developed for accurate value estimation and stable policy optimization.}
    \label{fig:dbc}

\end{figure*}

\section{Preliminaries}
\subsection{Distributional Reinforcement Learning}

Consider a Markov Decision Process (MDP) defined by the quintuple \((\mathcal{S}, \mathcal{A}, \mathcal{P}, \mathcal{R}, \gamma)\). Here, \(\mathcal{S}\) and \(\mathcal{A}\) denote the state and action spaces, respectively, with state \(\boldsymbol{s} \in \mathcal{S}\) and action \(\boldsymbol{a} \in \mathcal{A}\) both represented as vectors. The state transition probability density is given by \(\mathcal{P}: \mathcal{S} \times \mathcal{A} \to \Delta(\mathcal{S})\), \(\mathcal{R}(\boldsymbol{s}, \boldsymbol{a})\) is a stochastic reward function, and \(\gamma \in [0, 1)\) is the discount factor. A policy \(\pi: \mathcal{S} \to \mathcal{A}\) maps states to actions.

The random return \(Z^\pi(\boldsymbol{s}, \boldsymbol{a})\) is defined as the discounted sum of future rewards \(r_k \in \mathcal{R}\):
\begin{equation}
\begin{aligned}
&Z^\pi(\boldsymbol{s}, \boldsymbol{a}) = \sum_{k=0}^{\infty} \gamma^k r_k, \quad \boldsymbol{s}_0=\boldsymbol{s}, \ \boldsymbol{a}_0=\boldsymbol{a}, 
\\ 
&\boldsymbol{a}_k \sim \pi(\cdot|\boldsymbol{s}_k), \ \boldsymbol{s}_{k+1} \sim \mathcal{P}(\cdot|\boldsymbol{s}_k, \boldsymbol{a}_k).
\end{aligned}
\end{equation}
In contrast to traditional reinforcement learning, which estimates the expected return \(Q^\pi(\boldsymbol{s}, \boldsymbol{a}) = \mathbb{E}[Z^\pi]\), distributional reinforcement learning aims to model the full distribution of the one-dimensional random variable \(Z^\pi\). The distributional Bellman operator \(\mathcal{T}^\pi\) is defined as:
\begin{equation}
\mathcal{T}^\pi Z(\boldsymbol{s}, \boldsymbol{a}) \stackrel{D}{=} \mathcal{R}(\boldsymbol{s}, \boldsymbol{a}) + \gamma Z(\boldsymbol{s}', \pi(\boldsymbol{s}')),\,\boldsymbol{s}' \sim \mathcal{P}(\cdot|\boldsymbol{s}, \boldsymbol{a}),
\label{eq:build_target}
\end{equation}
where \(\stackrel{D}{=}\) denotes equality in distribution. Under the \(p\)-Wasserstein distance, \(\mathcal{T}^\pi\) is a \(\gamma\)-contraction operator [citation], which guarantees the existence and uniqueness of its distributional fixed point.

We characterize the distribution via its quantile function \(F^{-1}_{Z(\boldsymbol{s}, \boldsymbol{a})}(\tau)\), where \(\tau \in [0,1]\) is the quantile level. 
% Assuming the return distribution is continuous, its quantile function is monotonically non-decreasing on \([0,1]\):
% \begin{equation}
% \forall\,0 \leq \tau_1 \leq \tau_2 \leq 1, \quad F_Z^{-1}(\tau_1) \leq F_Z^{-1}(\tau_2).
% \end{equation}
% In practice, we treat \(\tau\) as a conditional input and learn an approximating function \(f_\theta(\boldsymbol{s}, \boldsymbol{a}, \tau)\) by minimizing the quantile regression loss:
\begin{equation}
\begin{aligned}
\mathcal{L}_{\text{QR}}(\theta) 
&= \mathbb{E}_{(\boldsymbol{s}, \boldsymbol{a})\sim \mathcal{D}, \ \tau \sim U([0,1])} \left[ \rho_\tau^\kappa \big( y - f_\theta(\boldsymbol{s}, \boldsymbol{a}, \tau) \big) \right],\\
\rho^\kappa_\tau(u) 
&= |\tau - \mathbb{I}(u<0)|\cdot \mcL_\kappa^{\text{Huber}}(u)
% \mcL_\kappa^{\text{Huber}}(u) 
% &= 
% \begin{cases} 
% \frac{1}{2}u^2 & \text{if } |u| \le \kappa \\
% \kappa(|u| - \frac{1}{2}\kappa) & \text{otherwise}
% \end{cases}
\end{aligned}
\end{equation}
where \(y\) is the target sample and \(\mcL_\kappa^{\text{Huber}}\) is the huber loss with parameter $\kappa$.

 We provide a detailed derivation of the empirical quantile and proofs of the related results in \textbf{Appendix}~\ref{appendix:empirical_quantile}.

% TODO

% \textbf{Diffusion Bridge Models.} Diffusion bridge models have achieved significant success across a range of applications, including image translation \cite{liu20232, li2023bbdm, zhou2023denoising}, image restoration \cite{luo2023image, yue2024image, UniDB, IRBridge}, and video generation \cite{wang2024framebridge}, which overcome the limitation of Gaussian prior in standard diffusion models. Early methodologies employed Schrodinger Bridges \cite{shi2023diffusion, somnath2023aligned, tang2024simplified, gushchin2024adversarial, nobis2025fractional, zhang2025voicebridge} through direct optimization of the Kullback-Leibler (KL) divergence, stochastic interpolants \cite{albergo2023stochastic}, Doob's \textit{h}-transform \cite{zhou2023denoising, yue2024image}, and Stochastic Optimal Control (SOC) theory \cite{DBFS, UniDB} to effectively model stochastic processes that connect arbitrary pairs of distributions. 
% This work pioneers the use of diffusion bridge models as critics in reinforcement learning. The approach offers an explicit mechanism for bridging the initial distribution and the target distribution, enhancing distributional value estimation.

\subsection{Diffusion Bridge} \label{sec:prelim_soc}
Diffusion bridge models have achieved significant success across a range of applications, including image translation \cite{liu20232, li2023bbdm, zhou2023denoising, albergo2023stochastic}, image restoration \cite{luo2023image, yue2024image, UniDB, IRBridge}, and video generation \cite{wang2024framebridge}, which overcome the limitation of Gaussian prior in standard diffusion models. UniDB \cite{UniDB} is a unified framework demonstrating that \textit{h}-transform-based models~\cite{zhou2023denoising, yue2024image} can be unified through the lens of Stochastic Optimal Control (SOC) theory. To model the transition starting from a sample $z^{\text{start}}$ in prior distribution to $z^{\text{end}}$data distribution, UniDB \cite{UniDB} constructed the following Linear Quadratic Optimal Control problem \cite{SOCtheory, chen2024generativemodelingphasestochastic} as
\begin{equation}\label{eq:unidb_soc_ode}
\begin{gathered}
\min_{u_t} \ \int_{0}^{1} \frac{1}{2} \|u_t\|_2^2 \mathrm{d}t + \frac{\gamma}{2} \| z^u_1 - z^{\text{end}} \|_2^2 \\
\text{s.t.} \ \mathrm{d} z^u_t = [\theta_t (z^{\text{end}} - z_t^u) + g_t u_t] \mathrm{d}t, \ z^u_0 = z^{\text{start}}, % \ x^u_1 = z^{\text{end}}, 
\end{gathered}
\end{equation}
% \min_{\boldsymbol{u}_{t}} \ \int_{0}^{1} \frac{1}{2} \|\boldsymbol{u}_{t}\|_2^2 \mathrm{d}t + \frac{\gamma}{2} \| \boldsymbol{x}^u_1 - \boldsymbol{x}_1 \|_2^2 \\
% \text{s.t.} \ \mathrm{d}\boldsymbol{x}^u_t = [\theta_t (\boldsymbol{x}_1 - \boldsymbol{x}^u_t) + g_t \boldsymbol{u}_{t}] \mathrm{d}t, \ \boldsymbol{x}^u_0 = \boldsymbol{x}_0, 
% 
where $x^u_t$ is the controlled diffusion process, $u_t$ is the controller, $\int_{0}^{1} \frac{1}{2} \|u_t\|_2^2 \mathrm{d}t$ is the transient cost, $\frac{\gamma}{2} \| x^u_1 - z^{\text{end}} \|_2^2$ is the terminal cost with its penalty coefficient $\gamma$, $\theta_t$ and $g_t$ are two scalar-valued functions with a relationship $g_t^2 = 2 \lambda^2 \theta_t$ where the steady variance level $\lambda^2$ is a given constant. Here we directly consider the deterministic optimal control problem~\eqref{eq:unidb_soc_ode} with fixed endpoints ($\gamma \rightarrow \infty$ to make the controlled dynamics precisely converge to $z^{\text{end}}$) \cite{chen2024generativemodelingphasestochastic} in UniDB because in the context of distributional reinforcement learning, the relationship between the boundary samples $z^{\text{start}}$ and $z^{\text{end}}$ is deterministic. According to Pontryagin Maximum Principle \cite{SOCtheory}, UniDB provided the solution to the SOC problem Eq.~\eqref{eq:unidb_soc_ode} and the corresponding transition $z_t$ between $z^{\text{start}}$ and $z^{\text{end}}$ as
\begin{equation}\label{eq:optimal_solution}
\begin{gathered}
u_t^* = g_t \frac{e^{-2\bar{\theta}_{t:1}}}{\bar{\sigma}_{t:1}^2}(z^{\text{end}} - z_t^u), \\
z_t^u = \xi(t) z^{\text{start}} + (1 - \xi(t)) z^{\text{end}}, \ \xi(t) = e^{-\bar{\theta}_{0:t}}\frac{\bar{\sigma}_{t:1}^2}{\bar{\sigma}_{0:1}^2}, 
\end{gathered}
\end{equation}
where $\bar{\theta}_{s:t} = \int_s^t \theta_z \mathrm{d} z$ and $\bar{\sigma}_{s:t}^2 = \lambda^2 (1 - e^{-2\bar{\theta}_{s:t}})$. For more details, please refer to UniDB \cite{UniDB}.

\section{Diffusion Bridge Critic}\label{sec:DBC}

In this section, we first reveal the Gaussian degradation problem in vanilla diffusion critic. To address this issue and sufficiently utilize the expressiveness of the diffusion bridge model, we propose the \textbf{Diffusion Bridge Critic} (DBC). With the quantile loss, it models the Q-value distribution as a "bridge" between quantiles \(\tau\) and the corresponding Q-quantiles \(Z_\tau\). Besides, we also design an anchor loss to reduce the training variance and speed up the convergence. Moreover, we further investigate the impact of DBC on policy improvement. Since the implementation of DBC is actually the discretization of diffusion bridges, we propose an integral-consistent discretization technique to calibrate the parameters of DBC, thereby providing accurate value estimates for policy improvement. The full pipeline of DBC is shown in Figure~\ref{fig:dbc}.
\input{algorithm/train_brief}
\subsection{Limitation of Vanilla Diffusion Critics}
As we mentioned before, diffusion critics tend to collapse to a trivial Gaussian distribution due to their distribution learning paradigm. Intuitively, the approximation errors in the TD target can be continuously accumulated by the Bellman operator under the diffusion-critic training paradigm, causing the learned distribution to gradually converge toward a Gaussian distribution, as suggested by the central limit theorem.
\begin{theorem}[Gaussian Degradation of Diffusion Critics]
    % \textbf{Gaussian Degradation of Diffusion Critics.} Diffusion Critics $f_\theta$ finally degrades into a Gaussian distribution $\mathcal{N}(Q(s,a), \sigma^2)$ with the Bellman update:
    Diffusion Critics $f_\theta$ finally degrades into a Gaussian distribution $\mathcal{N}(Q(s,a), \sigma^2)$ under the Bellman backup operator:
    \[
     \mathcal{T}^{\pi} Z(s, a)=R(s, a)+\gamma Z\left(s^{\prime}, a^{\prime}\right), Z\sim f_\theta(s,a).
    \]
    \label{thm:gaussian_degradation}
    \vspace{-2em}
\end{theorem}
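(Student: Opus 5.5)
The plan is to formalize the informal statement as a limit theorem for the iterated Bellman backup with model errors, and then prove it with a central-limit argument.

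\textbf{Model of one training round.} A vanilla diffusion critic is fit by denoising score matching to samples of the TD target. So in round $k$ the learned distribution is the pushforward of the target distribution, plus a zero-mean approximation perturbation. Concretely:
\begin{itemize}
\item Write $Z_{k+1}(s,a) \stackrel{D}{=} R(s,a) + \gamma Z_k(s',a') + \epsilon_{k}(s,a)$.
\item Here $\epsilon_k$ is the diffusion fitting error: i.i.d. across rounds, mean zero (the denoising objective is unbiased in expectation), finite variance $\sigma_\epsilon^2$, and independent of the transition randomness.
\end{itemize}
Assuming a diffusion model's error is an additive, independent noise term is the modelling step I would state explicitly as a hypothesis.

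\textbf{Unrolling the recursion.} Iterating the recursion $K$ times gives
\[
Z_K(s,a) \stackrel{D}{=} \sum_{j=0}^{K-1}\gamma^j\big(r_j+\epsilon_{K-1-j}\big) + \gamma^K Z_0 .
\]
The mean of this sum converges to $Q^\pi(s,a)$ as $K\to\infty$, because the $\epsilon$ terms have mean zero and $\gamma^K Z_0\to 0$. Its variance converges to $\sigma^2 := \mathrm{Var}(Z^\pi)+\sigma_\epsilon^2/(1-\gamma^2)$, provided rewards and errors are independent.

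\textbf{Gaussian limit.} A weighted sum with geometric weights is not a standard CLT setting, because the Lindeberg condition fails: the first term does not become negligible. The argument therefore has to use how errors behave within each round of a practical diffusion critic. The denoiser is applied over $N$ discretization steps, and each step contributes an independent small error. So I would model
\[
\epsilon_k = \sum_{n=1}^N \eta_{k,n},
\]
with $\eta_{k,n}$ i.i.d. of variance $\sigma_\epsilon^2/N$. The total error is then $\sum_{j,n}\gamma^j\eta_{\cdot,n}$, a triangular array whose maximal individual variance is $O(1/N)$. Lindeberg--Feller then gives that the accumulated error converges to $\mathcal N(0,\sigma_\epsilon^2/(1-\gamma^2))$.

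\textbf{Domination by the error term.} It remains to argue that the accumulated error swamps the true return fluctuation. In the diffusion paradigm the network re-adds noise every round, and the learned score is smoothed by the Gaussian forward kernel. I would show that each round replaces $Z_{k}$ by $Z_k * \mathcal N(0,\delta^2)$ up to the contraction $\gamma$. Taking characteristic functions,
\[
\phi_{Z_K}(\omega)=\prod_{j}\phi_R(\gamma^j\omega)\,e^{-\delta^2\gamma^{2j}\omega^2/2}.
\]
For the stated result I would identify $\mathcal N(Q,\sigma^2)$ as the limit whenever the reward term is itself a sum of many independent small contributions, or in the regime $\delta^2 \gg \mathrm{Var}(R)$. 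I would conclude using the contraction of $\mathcal T^\pi$ in $W_2$, which makes the fixed point unique.

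\textbf{Main obstacle.} This last step is where the plan is most likely to fail. The geometric weights prevent a clean CLT for a general reward distribution. The honest theorem likely needs to be stated either as Gaussian degradation of the error component, or under the assumption that the injected noise dominates.
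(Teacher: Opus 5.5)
Your plan stops short of the stated result: it ends by proposing to weaken the theorem. The step where it stalls is exactly the one the paper's proof handles with an idea you do not use, namely the limit $\gamma\to1$.

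\textbf{What the paper does.} It unrolls the perturbed backup to $\hat Z(s,a)=Z(s,a)+\sum_{t\ge0}\gamma^t\epsilon_t$. It then absorbs the return fluctuation into the error sequence by writing $Z(s,a)=Q(s,a)+\gamma^{-1}\epsilon_{-1}$. This makes $\hat Z(s,a)-Q(s,a)=\sum_{t\ge-1}\gamma^t\epsilon_t$ a single discounted sum of independent zero-mean terms, and the paper applies the CLT as $\gamma\to1$.

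\textbf{Why this answers your Lindeberg objection.} The objection is right for fixed $\gamma$. The $t$-th summand carries a fraction $\gamma^{2t}(1-\gamma^2)$ of the variance, so it never becomes negligible. For example, $\sum_{t\ge0}2^{-t}\epsilon_t$ with Rademacher $\epsilon_t$ is uniform on $[-2,2]$, not Gaussian. As $\gamma\to1$, however, every summand's share is at most $1-\gamma^2\to0$. Hence $\sqrt{1-\gamma^2}\sum_t\gamma^t\epsilon_t$ satisfies Lindeberg and is asymptotically normal; this is the discounted CLT. That removes the need for your sub-step decomposition $\epsilon_k=\sum_n\eta_{k,n}$. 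That decomposition in effect assumes each round's error is already Gaussian, and that per-step denoiser errors pass additively through a nonlinear sampler. The same limit is also how the paper disposes of the return term: $\epsilon_{-1}$ is a single summand whose normalized weight $\gamma^{-1}\sqrt{1-\gamma^2}$ vanishes.

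\textbf{Why fixed $\gamma$ cannot work.} With $\gamma$ held fixed, the conclusion you are after is false in general, and your own characteristic-function formula shows why. The limit law is $\mathcal{L}(Z^\pi)$ convolved with $\mathcal{N}(0,\delta^2/(1-\gamma^2))$, which is Gaussian only if $Z^\pi$ is. The product $\prod_j\phi_R(\gamma^j\omega)$ also treats rewards along a trajectory as independent, which fails in an MDP. Appealing to the $W_2$-contraction of $\mathcal{T}^\pi$ cannot rescue this. The perturbed operator is still a contraction, so the argument only certifies that this non-Gaussian fixed point is unique.

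\textbf{Where the paper is also heuristic.} Your sense that the return term is where rigor is hardest is well founded. The paper treats $Z-Q$ as an independent zero-mean error whose variance stays bounded as $\gamma\to1$; that is an assumption, not a derivation. Moreover, its Gaussian limit only makes sense after centering at $Q$ and rescaling by $\sqrt{1-\gamma^2}$. Under those assumptions, though, the paper's argument reaches the stated conclusion. Yours, working at fixed $\gamma$, can at best establish Gaussianity of the error component under an extra per-round Gaussianity hypothesis.
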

The proof can be referred to in \textbf{Appendix} \ref{appendix:proof}.

Besides, we also provide a toy example to illustrate that directly applying diffusion and flow matching methods to distributional RL leads to a tendency toward Gaussian distributions as bootstrapping training iterations progress in Figure~\ref{fig:fm_vs_DBC}. In contrast, DBC avoids this issue by explicitly modeling the relationship between \(z^{\text{start}}\) and \(z^{\text{end}}\) since it fits the mapping of the inverse CDF of the Q-value distribution instead of directly modeling the Q-value distribution. We provide detailed analyses in \textbf{Appendix} \ref{appendix:proof}.

\begin{figure*}[ht]
    \centering
    \includegraphics[width=1.0\linewidth]{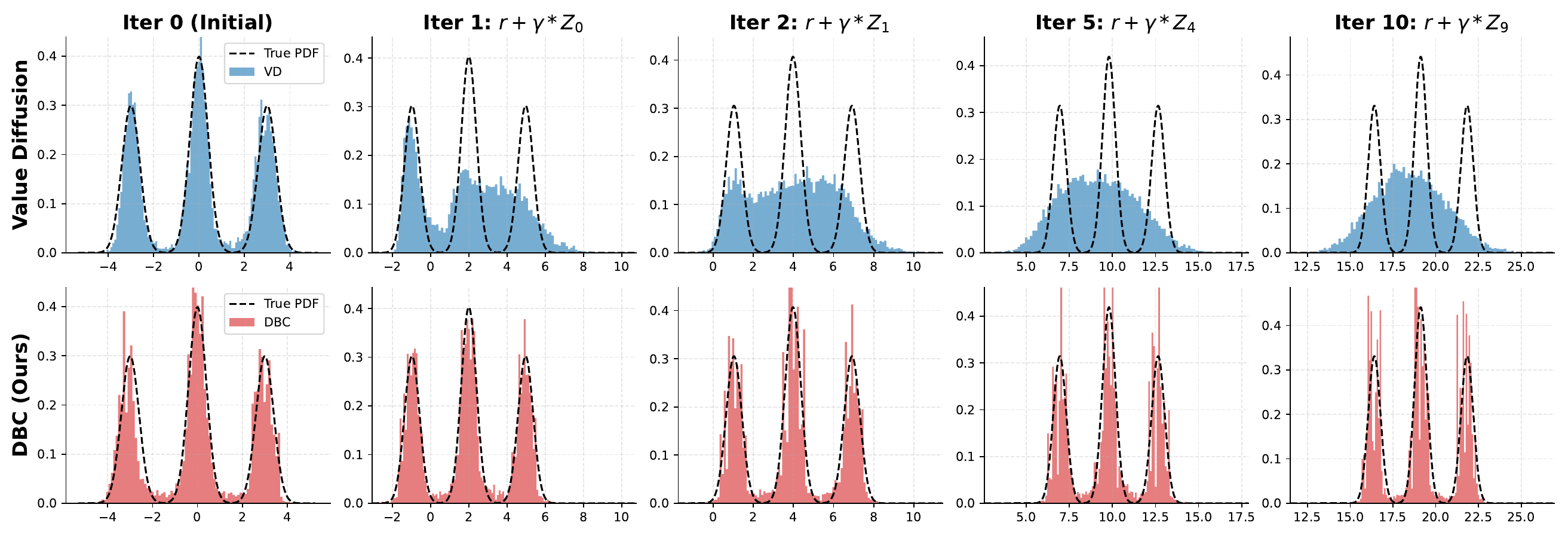}
    \caption{Top row: Value diffusion fits the iterative target \(r+\gamma \cdot Z_{k-1}^{\mathrm{FM}}\). Bottom row: DBC fits the iterative target \(r+\gamma \cdot Z_{k-1}^{\mathrm{DBC}}\). DBC introduces quantile conditioning \(\tau\), which consequently preserves multimodality of the distribution under iterative Bellman drift \(Z_k \leftarrow r+\gamma Z_{k-1}\). We takes $100$ inner training steps to fit $Z_k$ for each method in iteration and 10000 steps to fit the initial distribution in Iter 0.}
    \label{fig:fm_vs_DBC}

\end{figure*}

\subsection{Training Process of DBC}
\label{sec:empirical_anchor_training}

To resolve the Gaussian degradation problem, we propose diffusion bridge critics, which adopts a quantile-function parameterization for the distributional critic. Given a state--action pair $(\boldsymbol{s},\boldsymbol{a})$, a quantile level $\tau\in(0,1)$, and a bridge time $t\in[0,1]$, the online network directly predicts the data endpoint
\begin{equation}
\hat{z}_{\tau,t} = f_{\theta}(z_t,t,\tau,\boldsymbol{s},\boldsymbol{a}),
\end{equation}
where $z_t$ is an intermediate variable along the diffusion bridge trajectory Eq.~\eqref{eq:optimal_solution}. Since our training objectives are defined at the data endpoint $z^{\text{end}}$ (i.e., return particles), we employ a \emph{data-prediction} parameterization so that the supervision acts directly on $z^{\text{end}}$, avoiding time-scale reweighting and error propagation induced by additional parameter transformations.

For each critic update, we use a target network to generate a set of target return particles by Eq.~\eqref{eq:unidb_soc_ode}. Concretely, the target critic samples $\{z^{\mathrm{target}}_j\}_{j=1}^{K_{\text{tgt}}}$ at the next state--action pair $(\boldsymbol{s}',\boldsymbol{a}')$ through the bridge Eq.~\eqref{eq:unidb_soc_ode}, and we apply a Bellman transform to obtain
\begin{equation}\label{eq:target_particle}
\mathcal{Y}=\{y_j\}_{j=1}^{{K_{\text{tgt}}}},
\quad
y_j := r + \gamma z^{\mathrm{target}}_j.
\end{equation}
Given $\mathcal{Y}$, we jointly optimize two losses: a standard quantile loss and an \textit{anchor loss}.

Following prior quantile-based distributional RL methods, we regress each predicted quantile to the target particle distribution using the huberized quantile loss:
\begin{equation}\label{eq:loss:quantile_loss}
\mathcal{L}_{t}^{\mathrm{quantile}}
=
\sum_{j=1}^{{K_{\text{tgt}}}}\sum_{i=1}^{K_{\text{onl}}}
\rho_{\tau_i}^{\kappa}\!\Bigl(y_j - f_\theta(z_t,t,\tau_i,\boldsymbol{s},\boldsymbol{a})\Bigr),
\end{equation}
where $\{\tau_i\}_{i=1}^{K_{\text{onl}}}\subset(0,1)$ are quantile levels, $K_{\text{onl}}$ represents the number of samples generated by nline network. This objective exploits all particles to capture the distributional shape. However, the supervision for each $\tau_i$ is \emph{implicit}, the network must infer the quantile location through comparisons to all $y_j$, and the resulting gradient estimator can have high variance, which may slow down the early formation of a well-structured quantile function and destabilize training.

\paragraph{Anchor loss.}
To provide a more direct and lower-variance learning signal, we introduce an estimator for quantile. Let the order statistics of $\mathcal{Y}$ be
\begin{equation}
y_{(1)}\le y_{(2)}\le \cdots \le y_{({K_{\text{tgt}}})}.
\end{equation}
We define the sample $\tau$-quantile by
\begin{equation}\label{eq:empirical_quantile}
y^{\mathrm{sample}}(\tau) := y_{(\lceil {K_{\text{tgt}}}\tau\rceil)}.
\end{equation}
Importantly, $y^{\mathrm{sample}}(\tau)$ is not a heuristic label: it can be characterized as a \textbf{minimizer of Eq.~\eqref{eq:loss:quantile_loss}} with $\kappa=0$, i.e., it is the explicit solution obtained by solving Eq.~\eqref{eq:loss:quantile_loss} with fixed samples $\mathcal{Y}$. We provide a formal proof of this relationship and further properties of empirical quantiles in Appendix~\ref{appendix:empirical_quantile}. Thus, we can build $z_t$ by $y^{\text{sample}}(\tau)$ and $\tau$. For $t=0$, we directly set $z_t = \tau$. Using these anchors, we impose an explicit pointwise regression term with a robust Huber penalty:
 \begin{equation}\label{eq:loss:anchor}
\mathcal{L}_{t}^{\mathrm{Anchor}}
=
\sum_{i=1}^{K_{\text{onl}}}
\mathcal{L}_{\kappa}^{\mathrm{Huber}}
\Bigl(
f_\theta(z_t,t,\tau_i,\boldsymbol{s},\boldsymbol{a})
-
y^{\mathrm{sample}}(\tau_i)
\Bigr).
\end{equation}
This anchor loss provides direct supervision for each $\tau_i$, reducing optimization ambiguity and gradient variance, and accelerating the early shaping of the quantile function. While it does not impose a hard monotonicity constraint, the anchored targets $\tau_i \mapsto y^{\mathrm{sample}}(\tau_i)$ are naturally ordered, which encourages more consistent quantile structure and mitigates quantile crossing in practice. 

We train the critic with the weighted combination
\begin{equation}\label{eq:loss:total}
\mathcal{L}_{t}
=
\mathcal{L}_{t}^{\mathrm{quantile}}
+
w\cdot \mathcal{L}_{t}^{\mathrm{Anchor}},
\end{equation}
where $w\ge 0$ controls the anchoring strength. However, relying solely on the anchor loss is not recommended, as the anchor \textbf{provides a biased target in expectation}, with the bias magnitude on the order of  \(o\left(\frac{1}{\sqrt{K^{\mathrm{tgt}}}}\right)\). We provide a detailed analysis and discussion in Theorem~\ref{thm:bias_sample_quantile}. For conditioning, we encode both the quantile level \(\tau\) and the bridge time \(t\) using cosine embeddings and inject them into the critic. To mitigate value overestimation during critic updates, we adopt a DropTop aggregation strategy, which discards the largest quantile estimates and aggregates the remaining ones to obtain a conservative value estimate, following prior work on quantile-based critics~\cite{kuznetsov2020controlling}. 

For gradient-based methods such as SAC and TD3, whose policy updates rely on gradients of the return estimate, the backpropagation is performed through the entire sampling procedure.

Although the diffusion bridge defines a deterministic trajectory from $z^{\text{start}}$ to $z^{\text{end}}$, naive Euler discretization and iterative sampling may introduce\textbf{ numerical errors and endpoint bias}. We discuss this issue and the corresponding remedies in the next section.

\subsection{Integral-consistent Discretization of DBC}
\label{sec:integral-consistent}

In the previous section, we described how DBC is trained. However, Eq.~\eqref{eq:unidb_soc_ode} cannot be directly used to generate rewards for the policy. In practice, this equation must be discretized in order to be iteratively evaluated. Unfortunately, commonly used discretization schemes introduce \emph{endpoint bias} when applied to diffusion bridges. This bias causes the policy network to be optimized with respect to wrong reward signals, which can lead to training instability or even collapse.  
To address this issue, we propose an \emph{integral-consistent discretization} scheme in this section. A brief description of the proposed algorithm is provided in Algorithm~\ref{alg:dbc-ode-sampling}. 
% In policy training, inference efficiency is critical: the critic must generate return samples with a strictly limited number of function evaluations.
% In our framework, both target construction and policy evaluation require numerically integrating the diffusion-bridge dynamics from the prior endpoint $z^{\text{start}}$ to the data endpoint $z^{\text{end}}$.
% A naive discretization of the underlying continuous-time dynamics (e.g., Euler updates), however, generally introduces \emph{systematic} integration errors, causing the discretized trajectory to violate the intended boundary condition at the endpoint.
% We refer to this phenomenon as \emph{endpoint bias}.
% In this section, we formalize the source of endpoint bias in diffusion-bridge critics and present a remedy based on \emph{integral-consistent discretization}, which preserves the global geometry of the continuous dynamics while retaining fast inference.

To make the source of endpoint bias explicit, we rewrite the UniDB solution in Eq.~\eqref{eq:optimal_solution} and substitute it into the controlled dynamics in Eq.~\eqref{eq:unidb_soc_ode}.
This yields the following deterministic ODE along the bridge:
\begin{equation}
\frac{\mathrm{d} z_t}{\mathrm{d} t}
=
c(t)\,(z^{\text{end}} - z^{\text{start}}),
\quad
c(t)
=
\left(
\theta_t
+
g_t^{2}
\frac{e^{-2\bar{\theta}_{t:1}}}{\bar{\sigma}_{t:1}^{2}}
\right)
\xi(t),
\end{equation}
where $z_t \in \mathbb{R}$ denotes the scalar return value at time $t$, and $(z^{\text{start}},z^{\text{end}})$ are the prescribed boundary endpoints.
For the continuous-time trajectory to satisfy $z_0=z^{\text{start}}$ and $z_1=z^{\text{end}}$, the velocity coefficient must satisfy the global constraint
\begin{equation}
\int_{0}^{1} c(t)\,\mathrm{d}t = 1.
\end{equation}

\input{algorithm/sampling}

Let $0=t_0<t_1<\cdots<t_M=1$ be an arbitrary time partition.
A conventional Euler discretization applies the update
\begin{equation}
z_{t_{i+1}}
=
z_{t_i}
+
c(t_i)\,\Delta t_i\,(z^{\text{end}}-z^{\text{start}}),
\quad
\Delta t_i = t_{i+1}-t_i.
\end{equation}
After $M$ steps, the accumulated displacement becomes
\begin{equation}
z_{t_M}-z_{t_0}
=
\left(
\sum_{i=0}^{M-1} c(t_i)\Delta t_i
\right)
(z^{\text{end}}-z^{\text{start}}).
\end{equation}
For finite $M$, the discrete sum $\sum_i c(t_i)\Delta t_i$ generally deviates from the continuous integral $\int_0^1 c(t)\mathrm{d}t=1$, resulting in $z_{t_M}\neq z^{\text{end}}$. We refer to this discrepancy as endpoint bias.
Importantly, this discrepancy is not stochastic numerical noise but a systematic violation of the global boundary constraint. We provide a detailed analysis and empirical illustration of this endpoint bias in Appendix~\ref{appendix:integral-consistent}.

We now show that endpoint bias can be eliminated entirely by enforcing integral consistency at the discretization level.

\input{table/experiments}
\begin{theorem}[Endpoint Consistency via Integral-Consistent Discretization]
\label{thm:integral-consistent}
Let $z_t \in \mathbb{R}$ follow the ODE
\begin{equation}
\frac{\mathrm{d}z_t}{\mathrm{d}t} = c(t)\,(z^{\text{end}}-z^{\text{start}}),
\end{equation}
where $z^{\text{start}},z^{\text{end}}\in\mathbb{R}$ are fixed endpoints and $c:[0,1]\to\mathbb{R}$ is integrable with
\(
\int_0^1 c(t)\mathrm{d}t = 1.
\)
For any partition $0=t_0<t_1<\cdots<t_M=1$, define the discrete update by exact interval integration:
\begin{equation}
z_{t_{i+1}}
=
z_{t_i}
+
\left(
\int_{t_i}^{t_{i+1}} c(t)\,\mathrm{d}t
\right)
(z^{\text{end}}-z^{\text{start}}).
\end{equation}
Then the resulting discrete trajectory satisfies $z_{t_M}=z^{\text{end}}$ whenever $z_{t_0}=z^{\text{start}}$, independent of $M$ and the choice of partition.
\end{theorem}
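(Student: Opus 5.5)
The plan is to telescope the discrete updates and then use additivity of the integral over the partition. Set $\Delta := z^{\text{end}}-z^{\text{start}}$ and $C_i := \int_{t_i}^{t_{i+1}} c(t)\,\mathrm{d}t$ for $i=0,\dots,M-1$. Each $C_i$ is a well-defined real number because $c$ is integrable on $[0,1]$, hence on every subinterval. The update rule then reads $z_{t_{i+1}}-z_{t_i}=C_i\,\Delta$. Summing over $i=0,\dots,M-1$ gives the telescoping identity
\begin{equation*}
z_{t_M}-z_{t_0}=\Bigl(\sum_{i=0}^{M-1} C_i\Bigr)\Delta .
\end{equation*}
This mirrors the computation already made for the Euler scheme, where the Riemann sum $\sum_i c(t_i)\Delta t_i$ appeared; here exact interval integrals replace it.

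The key step is additivity of the integral over adjacent intervals. Since $0=t_0<t_1<\cdots<t_M=1$ partitions $[0,1]$ into consecutive subintervals,
\begin{equation*}
\sum_{i=0}^{M-1}\int_{t_i}^{t_{i+1}} c(t)\,\mathrm{d}t=\int_0^1 c(t)\,\mathrm{d}t = 1 .
\end{equation*}
This holds whether integrability is understood in the Riemann or the Lebesgue sense, because the endpoints have measure zero. The hypothesis $\int_0^1 c=1$ is therefore used here, and nowhere else.

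Substituting back with $z_{t_0}=z^{\text{start}}$ gives $z_{t_M}=z^{\text{start}}+\Delta=z^{\text{end}}$. No property of $M$ or of the $t_i$ was used beyond the fact that they form a partition, so the conclusion holds for every $M$ and every partition.

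None of these steps is a real obstacle; the argument is a direct computation. Two optional remarks could accompany it. First, the same argument shows $z_{t_i}$ coincides with the exact continuous solution $z^{\text{start}}+\bigl(\int_0^{t_i}c\bigr)\Delta$ at every node, not only at the endpoint. Second, the degenerate case $\Delta=0$ is trivial. It may also help to note that, for the UniDB coefficient $c(t)$, the interval integrals have the closed form $C_i=\xi(t_i)-\xi(t_{i+1})$ by Eq.~\eqref{eq:optimal_solution}. This follows because $z_t=\xi(t)z^{\text{start}}+(1-\xi(t))z^{\text{end}}$ implies $c(t)=-\xi'(t)$, which is what makes the scheme implementable. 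This point is not needed for the theorem itself.
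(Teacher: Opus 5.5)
Your proof is correct and follows the paper's argument: the paper also telescopes the updates into the single integral $\int_{t_0}^{t_M} c(t)\,\mathrm{d}t$, written as $C(t_M)-C(t_0)$ for an antiderivative $C$, and then uses $t_0=0$, $t_M=1$ and $\int_0^1 c(t)\,\mathrm{d}t=1$. Your appeal to additivity of the integral over the partition is the same step, just without naming the antiderivative.
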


We note the following identity between the interpolation coefficient $\xi(t)$ and the velocity coefficient $c(t)$:
\begin{equation}
c(t) = -\frac{\mathrm{d}\xi(t)}{\mathrm{d}t},
\label{eq:relation_interpolate}
\end{equation}
with boundary conditions $\xi(0)=1$ and $\xi(1)=0$.
A formal proof is provided in Appendix~\ref{appendix:integral-consistent}.

As a direct consequence, it leads to the following corollary.

\begin{corollary}[Integral-consistent discretization for DBC]
\label{cor:dbc-integral-consistent}
For any partition $0=t_0<t_1<\cdots<t_M=1$, set $\tilde{c}(t_i) := \xi(t_i)-\xi(t_{i+1})$, the update rule
\begin{equation}
\label{eq:update_rule}
z_{t_{i+1}}
=
z_{t_i}
+
\tilde{c}(t_i)(z^{\text{end}}-z^{\text{start}})
\end{equation}
is integral-consistent and guarantees strict endpoint consistency $z_{t_M}=z^{\text{end}}$ for any number of steps $M$.
\end{corollary}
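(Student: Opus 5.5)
The corollary follows from Theorem~\ref{thm:integral-consistent} once we show that $\tilde{c}(t_i)$ equals the exact interval integral of $c$. Equivalently, we can verify the endpoint directly by a telescoping sum.

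\textbf{Step 1: the identity \eqref{eq:relation_interpolate}.} Substituting $z_t = \xi(t) z^{\text{start}} + (1-\xi(t)) z^{\text{end}}$ from Eq.~\eqref{eq:optimal_solution} and differentiating gives
\[
\frac{\mathrm{d}z_t}{\mathrm{d}t} = -\xi'(t)\,(z^{\text{end}}-z^{\text{start}}).
\]
Compare this with the bridge ODE $\mathrm{d}z_t/\mathrm{d}t = c(t)(z^{\text{end}}-z^{\text{start}})$. Whenever $z^{\text{end}}\neq z^{\text{start}}$, matching coefficients yields $c(t) = -\xi'(t)$. One could also check this directly from the formula for $c(t)$, using $\xi(t)=e^{-\bar\theta_{0:t}}\bar\sigma^2_{t:1}/\bar\sigma^2_{0:1}$, $\frac{\mathrm{d}}{\mathrm{d}t}\bar\sigma^2_{t:1} = -2\lambda^2\theta_t e^{-2\bar\theta_{t:1}}$ and $g_t^2 = 2\lambda^2\theta_t$. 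Differentiating the product then gives
\[
-\xi'(t) = \theta_t\xi(t) + 2\lambda^2\theta_t e^{-2\bar\theta_{t:1}}\frac{e^{-\bar\theta_{0:t}}}{\bar\sigma^2_{0:1}} = \Bigl(\theta_t + g_t^2\frac{e^{-2\bar\theta_{t:1}}}{\bar\sigma^2_{t:1}}\Bigr)\xi(t),
\]
which is exactly $c(t)$.

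\textbf{Step 2: boundary values and integrability.} Since $\bar\theta_{0:0}=0$, we get $\xi(0)=\bar\sigma^2_{0:1}/\bar\sigma^2_{0:1}=1$. Since $\bar\sigma^2_{1:1}=\lambda^2(1-e^{0})=0$, we get $\xi(1)=0$. Hence
\[
\int_0^1 c(t)\,\mathrm{d}t = \xi(0)-\xi(1) = 1,
\]
which is the hypothesis of Theorem~\ref{thm:integral-consistent}.

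\textbf{Step 3: conclusion.} By the fundamental theorem of calculus, $\int_{t_i}^{t_{i+1}} c(t)\,\mathrm{d}t = \xi(t_i)-\xi(t_{i+1}) = \tilde{c}(t_i)$. So the update \eqref{eq:update_rule} is precisely the exact-interval-integration scheme of Theorem~\ref{thm:integral-consistent}, and that theorem yields $z_{t_M}=z^{\text{end}}$. A self-contained alternative is to telescope: $\sum_{i=0}^{M-1}\tilde c(t_i) = \xi(t_0)-\xi(t_M) = 1$, so $z_{t_M} = z^{\text{start}} + (z^{\text{end}}-z^{\text{start}}) = z^{\text{end}}$ for every $M$ and every partition.

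\textbf{Main obstacle: the endpoint $t=1$.} The only delicate point is the behaviour of $\xi$ and $c$ near $t=1$, where $\bar\sigma^2_{t:1}\to 0$ and the term $e^{-2\bar\theta_{t:1}}/\bar\sigma^2_{t:1}$ in $c(t)$ blows up. To handle this, I would note that $\xi$ is $C^1$ on $[0,1)$ and continuous on $[0,1]$, assuming $\theta_t$ is continuous and positive. In $c(t)$ the singular factor is multiplied by $\xi(t)\propto\bar\sigma^2_{t:1}$, so the two cancel and $c$ stays bounded. The fundamental theorem of calculus on $[t_i,t_{i+1}]$ then holds, via a limit argument on the last interval. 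The telescoping argument avoids this issue entirely, because it only uses the values of $\xi$ at the grid points.
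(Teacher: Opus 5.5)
Your proposal is correct and follows essentially the paper's route: the paper differentiates $z_t=\xi(t)z^{\text{start}}+(1-\xi(t))z^{\text{end}}$ to obtain $c(t)=-\dot{\xi}(t)$, then uses $\xi(0)=1$, $\xi(1)=0$ and the telescoping argument of Theorem~\ref{thm:integral-consistent}. Your direct check of $c=-\xi'$ from the closed form and your verification of the boundary values go beyond what the paper writes out. The worry about $t=1$ is unnecessary: $\xi(t)=\lambda^2\bigl(e^{-\bar\theta_{0:t}}-e^{-\bar\theta_{0:t}-2\bar\theta_{t:1}}\bigr)/\bar\sigma^2_{0:1}$ is $C^1$ on all of $[0,1]$ whenever $\theta$ is continuous and $\bar\sigma^2_{0:1}\neq 0$.
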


This discretization exactly preserves the global geometry of the continuous bridge while requiring only evaluations of $\xi(t)$, without numerical quadrature or step-size tuning.
As a result, DBC maintains stable generative semantics and strict endpoint consistency even with as few as $M=5$ inference steps, which is critical for low-latency online reinforcement learning.
We provide a concise for our sampling process in Algorithm~\ref{alg:dbc-ode-sampling}.

\paragraph{Policy training.}
After the critic is fully trained, we use the online network to generate \(K_{\text{onl}}\) return samples via Algorithm~\ref{alg:dbc-ode-sampling}.
These samples are averaged to obtain an expected return estimate \(q_{\text{onl}}\), which is then provided to the actor for policy optimization.
We provide a brief overview of our method in Algorithm~\ref{alg:dbq-training-brief}.

\section{Experiments}
In this section, we systematically evaluate the performance of DBC on multiple MuJoCo continuous-control benchmarks. We first compare DBC against several representative critic baselines to quantify its overall performance gains. We then assess DBC as a plug-and-play critic module by integrating it into different actor algorithms, thereby validating its generality across actor backbones. Finally, we conduct two ablation studies to analyze how key hyperparameters affect performance and training stability, including the flow steps \(M\) and the anchor loss weight \(w\). 
% These experiments are designed to answer the following questions: (i) How does DBC perform overall? (ii) Does DBC consistently improve performance when used as a plug-and-play critic module across different actor backbones? (iii) How should \(M\) and \(w\) be selected to balance performance, computational cost, and training stability?

\subsection{Comparative Evaluation}
We benchmark DBC against several critic baselines, including SAC~\citep{haarnoja2018soft}, DSAC~\citep{dsac}, IQN~\citep{dabney2018implicit}, TQC~\citep{kuznetsov2020controlling}, Value Diffusion (VD)~\citep{hu2025value}, and Value Flows (VF)~\citep{dong2025value}, all implemented within our codebase. Brief descriptions of these baselines are provided in Appendix~\ref{appendix:baselines}. Table~\ref{tab:main:experiments} presents the main results. For a controlled comparison, all baseline methods use a unified SAC actor backbone; for DBC, we additionally report the TD3-based variant on HalfCheetah-v5. Detailed experimental settings are provided in Appendix~\ref{appendix:detailed_experimental_setting}, and the complete breakdowns per-actor are reported in Appendix~\ref{appendix:full_experiments}.

In Table~\ref{tab:main:experiments}, all baselines are trained for \(10^{6}\) environment steps and evaluated over three random seeds. As shown, DBC achieves the best overall performance. DSAC exhibits a similar behavior, as it explicitly parameterizes the return distribution with a single Gaussian, limiting its capacity to represent rich, potentially multimodal return distributions. Learning curves of these baselines are demonstrated in Figure~\ref{fig:training_curves}.

\begin{figure*}[htbp]
    \centering
    \includegraphics[width=0.8\linewidth]{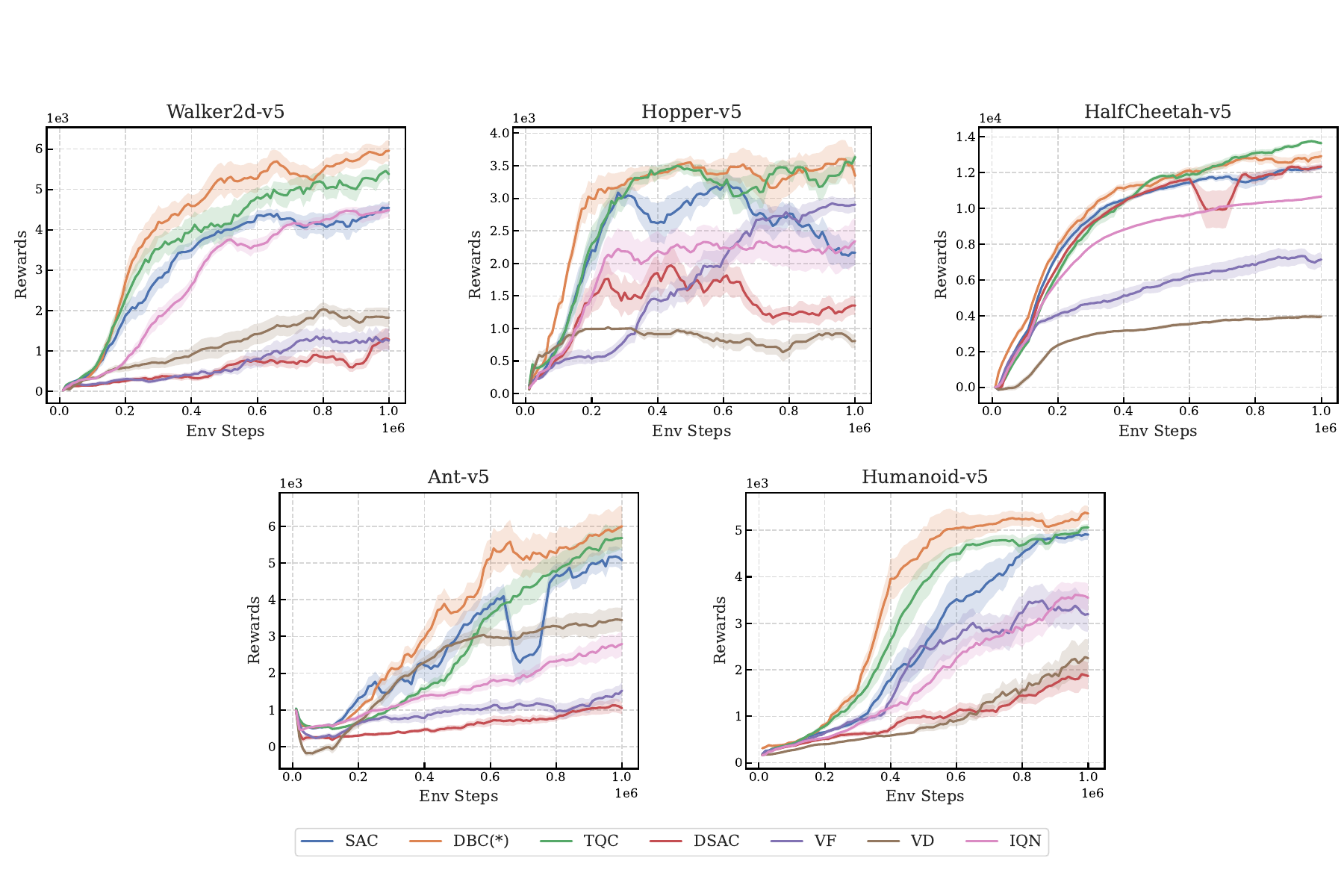}
    % \caption{Learning curves of the proposed method and all baselines on five MuJoCo benchmarks, evaluated every 10k iterations with 10 evaluation episodes. Solid lines denote the mean performance, and shaded regions indicate the standard deviation across three random seeds.}
    \caption{Learning curves of different algorithms on five MuJoCo benchmarks. The x-axis denotes training epochs, and the y-axis denotes episodic return. Curves are smoothed for improved visualization.}
    \label{fig:training_curves}
    \vspace{-4mm}
\end{figure*}

\subsection{DBC as a plug-and-play module.}

To validate the effectiveness of DBC as a plug-and-play critic module, we integrate it into three representative actor algorithms, SAC, TD3, and QVPO,and evaluate the resulting methods on two challenging continuous-control benchmarks, Humanoid-v5 and Walker2d-v5. As shown in Table~\ref{tab:plugin_play}, replacing CDQ with DBC consistently improves performance across all actor backbones. In particular, on Humanoid-v5, DBC yields a maximum improvement of 13\% when paired with the SAC actor, while on Walker2d-v5, it achieves up to a 24\% performance gain when integrated with TD3.

\input{table/plug_in_play}

\subsection{Ablation Study}
\subsubsection{Time Steps}
% In this section, we conduct an ablation study on the number of time steps $T$ to demonstrate that our multi-step critic provides substantial performance gains rather than being a trivial integration of a Diffusion Model and a RL critic. Through this analysis, we aim to uncover the intrinsic merits of the multi-step architecture while identifying potential trade-offs or limitations.

% As illustrated in Table~\ref{tab:ablation:timesteps}, $T=5$ exhibits superior performance compared to the $T=1$ baseline, outperforming it in all tasks except for Hopper-v5. This confirms that the introduction of a multi-step critic significantly enhances the fitting capability of the model. Furthermore, while $T=10$ also leverages multiple steps, the increased iteration count results in an extended backpropagation chain for the actor's updates, which can compromise training stability. Consequently, we recommend $T=5$ as it achieves an optimal trade-off between performance, computational cost, and training stability.
We ablate the number of time steps $M$ to validate that our multi-step critic offers substantial gains beyond a naive integration of Diffusion Models and RL. As shown in Table~\ref{tab:ablation:timesteps}, $M=5$ outperforms $M=1$ in all tasks, confirming that multi-step processing significantly enhances the ability to fit. However, increasing $M$ to 10 introduces instability due to extended backpropagation chains. Therefore, we recommend $M=5$ as the optimal trade-off between performance, cost, and stability.

\input{table/timesteps}

\subsubsection{Weight for Anchor Loss}
Finally, we analyze the effect of the anchor loss introduced in Section~\ref{sec:empirical_anchor_training}. Recall that our critic objective in Eq.~\eqref{eq:loss:total} combines the quantile loss in Eq.~\eqref{eq:loss:quantile_loss} with an anchor term Eq.~\eqref{eq:loss:anchor} weighted by \(w\). Table~\ref{tab:ablation:w2} shows that adding a small anchor weight consistently improves performance: \(w=0.01\) achieves the best results on Ant-v5, Humanoid-v5, and Walker2d-v5, outperforming the \(w=0\) baseline that uses Eq.~\eqref{eq:loss:quantile_loss} alone. In contrast, larger weights (e.g., \(w \ge 0.02)\) can degrade performance, indicating that over-emphasizing the anchor term may distort the training objective. This is because the objective is biased as we have discussed before. Moreover, removing Eq.~\eqref{eq:loss:quantile_loss} leads to a substantial performance drop across all tasks, confirming that the quantile loss is essential for learning the distributional structure while the anchor loss is most effective as a auxiliary signal. Based on these results, \(w=0.01\) is recommended.
\input{table/w2_weight}

% Finally, in this section, we conduct ablation experiments to verify that the Fit Loss introduced in Section~\ref{sec:empirical_anchor_training} indeed facilitates better distribution fitting by the network. Furthermore, given that this introduced loss constitutes a biased optimization objective to some extent, we also investigate the appropriate weighting for this biased term.

% Experimental results in Table~\ref{tab:ablation:w2} demonstrate that a slight introduction of Fit Loss (with $w=0.01$) yields a consistent performance improvement over the baseline where $w=0$ (i.e., using Quantile Loss Eq.~\eqref{eq:loss:quantile_loss} alone). This suggests that Fit Loss effectively facilitates the network in achieving a better distribution fit. However, we observe that starting from $w=0.02$, the performance on certain tasks, such as Ant and Humanoid, begins to underperform the $w=0$ baseline. This degradation becomes more pronounced at $w=0.1$ across multiple tasks including HalfCheetah, Hopper, and Humanoid. These findings indicate that while introducing a biased Fit Loss may incur some performance trade-offs, it remains highly beneficial as it more directly calibrates the model toward the ground-truth distribution. Consequently, incorporating an empirical quantile-based Fit Loss is justified, and we recommend an optimal hyperparameter setting of $w=0.01$.

\section{Conclusion}

In this work, we revisit diffusion-based reinforcement learning from the perspective of value estimation and propose Diffusion Bridge Critics (DBC), a novel distributional RL method that leverages diffusion bridge models to accurately capture the Q-value distribution. By learning the inverse cumulative distribution function of Q-value distribution, DBC avoids the discrete quantile approximation and the Gaussian degradation that arises in vanilla diffusion critics under Bellman backups. Moreover, we derive an integral-consistent discretization to correct the bias induced by finite diffusion steps, leading to more accurate and stable value estimation. As a plug-and-play component, DBC can be seamlessly integrated into existing RL algorithms, and empirical results on MuJoCo benchmarks demonstrate consistent performance gains over prior distributional critics. These findings highlight the importance of expressive and reliable diffusion-based critics, and suggest that improving value estimation is a key direction for advancing diffusion-based RL.

\section*{Acknowledgement}
This work was supported by National Natural Science Foundation of China (62303319, 62406195), Shanghai Local College Capacity Building Program (23010503100), ShanghaiTech AI4S Initiative SHTAI4S202404, HPC Platform of ShanghaiTech University, and MoE Key Laboratory of Intelligent Perception and Human-Machine Collaboration (ShanghaiTech University), Shanghai Engineering Research Center of Intelligent Vision and Imaging. This work was also supported in part by computational resources provided by Fcloud CO., LTD.

\section*{Impact Statement}

This work focuses on improving value estimation in reinforcement learning by introducing diffusion bridge models as critics. By enabling more accurate distributional critic, DBC has the potential to enhance the reliability and efficiency of reinforcement learning systems, particularly in robotics and continuous control applications. 

\bibliography{reference}
\bibliographystyle{icml2026}

%%%%%%%%%%%%%%%%%%%%%%%%%%%%%%%%%%%%%%%%%%%%%%%%%%%%%%%%%%%%%%%%%%%%%%%%%%%%%%%
%%%%%%%%%%%%%%%%%%%%%%%%%%%%%%%%%%%%%%%%%%%%%%%%%%%%%%%%%%%%%%%%%%%%%%%%%%%%%%%
% APPENDIX
%%%%%%%%%%%%%%%%%%%%%%%%%%%%%%%%%%%%%%%%%%%%%%%%%%%%%%%%%%%%%%%%%%%%%%%%%%%%%%%
%%%%%%%%%%%%%%%%%%%%%%%%%%%%%%%%%%%%%%%%%%%%%%%%%%%%%%%%%%%%%%%%%%%%%%%%%%%%%%%
\newpage
\appendix
\onecolumn
\input{appendix}

%%%%%%%%%%%%%%%%%%%%%%%%%%%%%%%%%%%%%%%%%%%%%%%%%%%%%%%%%%%%%%%%%%%%%%%%%%%%%%%
%%%%%%%%%%%%%%%%%%%%%%%%%%%%%%%%%%%%%%%%%%%%%%%%%%%%%%%%%%%%%%%%%%%%%%%%%%%%%%%

\end{document}

%% file: algorithm/train_brief.tex
\begin{algorithm}[htbp]
\caption{Training for DBC}
\label{alg:dbq-training-brief}
\textbf{Input:} replay buffer $\mcD$, online parameters $\theta$, target parameters $\phi$, heads $H$, discount $\gamma$, atoms $K_{\text{target}}$
\begin{algorithmic}[1]
\FOR{$i=1$ \TO\,$N$}
  \STATE Sample $(\boldsymbol{s},\boldsymbol{a},r,\boldsymbol{s}',\mathrm{done})\sim \mcD$, and sample $\boldsymbol{a}'\sim\pi(\cdot\mid \boldsymbol{s}')$
  \STATE Generate target value \(\{z_i\}_{i=1}^{K_{\text{tgt}}}\) by Eq.~\ref{eq:update_rule}
  \STATE Build target sample \(\mathcal{Y}=\{y_i\}_{i=1}^{K_{\mathrm{target}}}\) by Eq.~\eqref{eq:target_particle}.
  \STATE Sample $\tau_i \sim U([0,1])$ and set $z^{\text{start}}(\tau_i) \gets \tau_i$
  \STATE Calculate $\{y^{\mathrm{emp}}_i\}_{i=1}^{K}$ according to Eq.~\eqref{eq:empirical_quantile}
  \FOR{$h=1$ \TO $\,H$}
    \STATE Calculate $\mcL^{(h)}$ according to Eq.~\eqref{eq:loss:total}
  \ENDFOR
  \STATE Update $\theta$ using $\frac{1}{H}\nabla_\theta \sum_{h=1}^H \mcL^{(h)}$
  \STATE Soft update for target network.
\ENDFOR

\end{algorithmic}
\end{algorithm}

%% file: algorithm/sampling.tex
\begin{algorithm}[htbp]
\caption{Integral-consistent ODE Sampling for DBC}
\label{alg:dbc-ode-sampling}
\textbf{Input:} online critic $f_\theta$, state-action $(\boldsymbol{s},\boldsymbol{a})$, quantiles $\{\tau_i\}_{i=1}^{K}$
\textbf{Hyperparameters:} time grid $\{t_m\}_{m=0}^{M}$, interpolation $\xi(\cdot)$
\begin{algorithmic}[1]
\FOR{$i=1$ \TO\,$K$ (In Parallel)}
  \STATE $z \gets z^{\text{start}} \gets \tau_i$
  \FOR{$m=0$ \TO $M-1$}
    \STATE $\hat{z}_{\tau, t_m} \gets f_\theta(z,t_m,\tau_i,\boldsymbol{s},\boldsymbol{a})$
    % \STATE $z \gets z + (\xi(t_m)-\xi(t_{m+1}))(\hat{z}^1-z^{\text{start}})$
    \STATE $z \gets z + \tilde{c}(t_m)(\hat{z}_{\tau, t_m}-z^{\text{start}})$
  \ENDFOR
  \STATE $z^1_{(i)} \gets z$
\ENDFOR
\end{algorithmic}
\end{algorithm}

%% file: table/experiments.tex
\begin{table*}[t]
\caption{Mean evaluation performance of different algorithms in the 5 MuJoCo benchmarks. All experiments are conducted with three random seeds. The returns reported in the tables are the averages of the best-performing results across these seeds.}
\label{tab:main:experiments}
% \vskip 0.15in
\begin{center}
\begin{small}
\begin{tabular}{lccccc}
\toprule
Run & Ant-v5 & HalfCheetah-v5 & Hopper-v5 & Humanoid-v5 & Walker2d-v5 \\
\midrule
SAC  & 6121.8 \(\pm\) 412.7 & 12776.4 \(\pm\) 285.2 & 3630.5 \(\pm\) 50.0  & 5207.1 \(\pm\) 94.1  & 4854.4 \(\pm\) 372.8 \\
IQN  & 3487.4 \(\pm\) 798.5 & 10930.2 \(\pm\) 152.7 & 3301.8 \(\pm\) 360.1 & 4729.2 \(\pm\) 580.5 & 4766.5 \(\pm\) 255.7 \\
TQC  & 6342.6 \(\pm\) 294.7 & \textbf{13996.2 \(\pm\) 191.7} & 3704.6 \(\pm\) 94.5  & 5269.0 \(\pm\) 113.8 & 5802.6 \(\pm\) 234.2 \\
VD   & 3850.4 \(\pm\) 987.1 & 4284.8 \(\pm\) 478.2 & 1052.7 \(\pm\) 20.4  & 4886.9 \(\pm\) 228.8 & 2533.9 \(\pm\) 302.4 \\
VF   & 2650.3 \(\pm\) 163.8 & 8199.5 \(\pm\) 1533.3 & 3310.4 \(\pm\) 24.6  & 4950.3 \(\pm\) 60.3  & 2626.4 \(\pm\) 1387.4 \\
DSAC & 2013.2 \(\pm\) 679.9 & 12974.0 \(\pm\) 191.7 & 3522.9 \(\pm\) 23.5  & 3442.6 \(\pm\) 216.8 & 3297.4 \(\pm\) 1248.3 \\
\midrule
DBC(*) & \textbf{6501.4 \(\pm\) 84.3}  & \textbf{13787.8 \(\pm\) 346.5} & \textbf{3732.5 \(\pm\) 83.2} & \textbf{5906.7 \(\pm\) 198.6} & \textbf{6138.2 \(\pm\) 38.1} \\
\bottomrule
\end{tabular}
\end{small}
\end{center}
\vskip -0.1in
\end{table*}

%% file: table/plug_in_play.tex
% \begin{table*}[htbp]
% \caption{DBC as a Plug-and-Play Critic Module Across Actors}
% \label{tab:dbc:plugin_play}
% \vskip 0.15in
% \begin{center}
% \begin{small}
% \resizebox{\linewidth}{!}{%
% \begin{tabular}{llccccc}
% \toprule
% Actor & Critic & Ant-v5 & HalfCheetah-v5 & Hopper-v5 & Humanoid-v5 & Walker2d-v5 \\
% \midrule
% \multirow{2}{*}{QVPO}
% & CDQ  & 6373.2 \(\pm\) 65.0  & 12103.9 \(\pm\) 422.3 & 3582.0 \(\pm\) 48.1  & 5068.3 \(\pm\) 28.5  & 4986.3 \(\pm\) 279.6 \\
% & DBC(*) & \textbf{6633.8 \(\pm\) 71.2} & 13182.1 \(\pm\) 252.4 & 3721.3 \(\pm\) 116.6 & 5426.3 \(\pm\) 9.0   & 5448.1 \(\pm\) 193.8 \\
% \midrule
% \multirow{2}{*}{SAC}
% & CDQ  & 6349.2 \(\pm\) 562.9 & 12776.4 \(\pm\) 285.2 & 3630.5 \(\pm\) 50.0  & 5207.1 \(\pm\) 94.1  & 4854.4 \(\pm\) 372.8 \\
% & DBC(*) & 6501.4 \(\pm\) 84.3  & 13120.2 \(\pm\) 162.4 & \textbf{3732.5 \(\pm\) 83.2} & \textbf{5906.7 \(\pm\) 198.6} & 6138.2 \(\pm\) 38.1 \\
% \midrule
% \multirow{2}{*}{TD3}
% & CDQ  & 4257.0 \(\pm\) 1711.6 & 11679.0 \(\pm\) 629.8 & 3596.2 \(\pm\) 38.8  & 5067.8 \(\pm\) 33.7  & 5093.7 \(\pm\) 439.8 \\
% & DBC(*) & 5306.0 \(\pm\) 1303.6 & 13787.8 \(\pm\) 346.5 & 3685.7 \(\pm\) 68.2  & 5343.2 \(\pm\) 199.3 & \textbf{6335.5 \(\pm\) 269.4} \\
% \bottomrule
% \end{tabular}
% }
% \end{small}
% \end{center}
% \vskip -0.1in
% \end{table*}
\begin{table}[htbp]
\caption{DBC as a Plug-and-Play Critic Module Across Actors on Humanoid-v5 and Walker2d-v5 benchmarks.}
\label{tab:plugin_play}
\centering
\begin{small}
\begin{tabular}{llcc}
\toprule
Actor & Critic & Humanoid-v5 & Walker2d-v5 \\
\midrule
\multirow{2}{*}{QVPO}
& CDQ    & 5068.3 \(\pm\) 28.5  & 4986.3 \(\pm\) 279.6 \\
& DBC(*) & \textbf{5426.3 \(\pm\) 9.0}  & \textbf{5448.1 \(\pm\) 193.8} \\
\midrule
\multirow{2}{*}{SAC}
& CDQ    & 5207.1 \(\pm\) 94.1  & 4854.4 \(\pm\) 372.8 \\
& DBC(*) & \textbf{5906.7 \(\pm\) 198.6} & \textbf{6138.2 \(\pm\) 38.1} \\
\midrule
\multirow{2}{*}{TD3}
& CDQ    & 5067.8 \(\pm\) 33.7  & 5093.7 \(\pm\) 439.8 \\
& DBC(*) & \textbf{5343.2 \(\pm\) 199.3} & \textbf{6335.5 \(\pm\) 269.4} \\
\bottomrule
\end{tabular}
\end{small}
\end{table}

%% file: table/timesteps.tex
% \begin{table*}[htbp!]
% \caption{Ablation of Flow Steps (T)}
% \label{tab:ablation:timesteps}
% \begin{center}
% \begin{tabular}{lccccc}
% \toprule
% Run & Ant-v5 & HalfCheetah-v5 & Hopper-v5 & Humanoid-v5 & Walker2d-v5 \\
% \midrule
% T=1 & 5150.4 \(\pm\) 2226.3 & 12821.5 \(\pm\) 26.4 & \textbf{3748.1 \(\pm\) 46.8} & 5524.7 \(\pm\) 132.7 & 5809.8 \(\pm\) 308.6 \\
% T=10 & 5946.6 \(\pm\) 379.7 & 13066.6 \(\pm\) 203.0 & 3607.9 \(\pm\) 30.5 & 4848.7 \(\pm\) 88.1 & 6014.3 \(\pm\) 561.6 \\
% T=5 & \textbf{6501.4 \(\pm\) 84.3} & \textbf{13120.2 \(\pm\) 162.4} & 3732.5 \(\pm\) 83.2 & \textbf{5906.7 \(\pm\) 198.6} & \textbf{6138.2 \(\pm\) 38.1} \\
% \bottomrule
% \end{tabular}
% \end{center}
% \end{table*}

\begin{table}[htbp!]
\caption{Ablation of Flow Steps \(M\).}
\label{tab:ablation:timesteps}
\begin{center}
\resizebox{\linewidth}{!}{
\begin{tabular}{lccccc}
\toprule
Run & Ant-v5  & Humanoid-v5 & Walker2d-v5 \\
\midrule
M=1 & 5150.4 \(\pm\) 2226.3  & 5524.7 \(\pm\) 132.7 & 5809.8 \(\pm\) 308.6 \\
M=5 & \textbf{6501.4 \(\pm\) 84.3} & \textbf{5906.7 \(\pm\) 198.6} & \textbf{6138.2 \(\pm\) 38.1} \\
M=10 & 5946.6 \(\pm\) 379.7  & 4848.7 \(\pm\) 88.1 & 6014.3 \(\pm\) 561.6 \\
\bottomrule
\end{tabular}
}
\end{center}
\end{table}

%% file: table/w2_weight.tex
% \begin{table*}[htbp!]
% \caption{Sensitivity Analysis of Loss Weight (w2)}
% \label{tab:ablation:w2}
% \begin{center}
% \begin{tabular}{lccccc}
% \toprule
% Run & Ant-v5 & HalfCheetah-v5 & Hopper-v5 & Humanoid-v5 & Walker2d-v5 \\
% \midrule
% \(w\)=0.0 & 4828.3 \(\pm\) 1439.0 & 12990.1 \(\pm\) 719.7 & 3646.3 \(\pm\) 84.9 & 5807.3 \(\pm\) 175.8 & 5562.9 \(\pm\) 423.2 \\
% \(w\)=0.01 & \textbf{6501.4 \(\pm\) 84.3} & 13120.2 \(\pm\) 162.4 & \textbf{3732.5 \(\pm\) 83.2} & \textbf{5906.7 \(\pm\) 198.6} & \textbf{6138.2 \(\pm\) 38.1} \\
% \(w\)=0.02 & 3947.3 \(\pm\) 2151.8 & \textbf{13165.8 \(\pm\) 301.2} & 3702.6 \(\pm\) 86.5 & 5743.4 \(\pm\) 301.1 & 6097.8 \(\pm\) 375.1 \\
% \(w\)=0.1 & 5716.2 \(\pm\) 741.3 & 12775.5 \(\pm\) 127.8 & 3591.1 \(\pm\) 79.2 & 5589.0 \(\pm\) 50.3 & 5974.6 \(\pm\) 603.2 \\
% \bottomrule
% \end{tabular}
% \end{center}
% \end{table*}

% \begin{table}[htbp!]
% \caption{Ablation of Loss Weight \(w\)}
% \label{tab:ablation:w2}
% \begin{center}
% \resizebox{\linewidth}{!}{
% \begin{tabular}{lccccc}
% \toprule
% Run & Ant-v5  & Humanoid-v5 & Walker2d-v5 \\
% \midrule
% \(w\)=0.0 & 4828.3 \(\pm\) 1439.0 & 5807.3 \(\pm\) 175.8 & 5562.9 \(\pm\) 423.2 \\
% \(w\)=0.01 & \textbf{6501.4 \(\pm\) 84.3} & \textbf{5906.7 \(\pm\) 198.6} & \textbf{6138.2 \(\pm\) 38.1} \\
% \(w\)=0.02 & 3947.3 \(\pm\) 2151.8 & 5743.4 \(\pm\) 301.1 & 6097.8 \(\pm\) 375.1 \\
% \(w\)=0.1 & 5716.2 \(\pm\) 741.3 & 5589.0 \(\pm\) 50.3 & 5974.6 \(\pm\) 603.2 \\
% \bottomrule
% \end{tabular}
% }
% \end{center}
% \end{table}

\begin{table}[htbp]
\caption{Ablation of Loss Weight \(w\).}
\label{tab:ablation:w2}
\begin{center}
\begin{small}
\resizebox{\linewidth}{!}{%
\begin{tabular}{lccc}
\toprule
Run & Ant-v5 & Humanoid-v5 & Walker2d-v5 \\
\midrule
\(w=0.0\) & 4828.3 \(\pm\) 1439.0 & 5807.3 \(\pm\) 175.8 & 5562.9 \(\pm\) 423.2 \\
\(w=0.01\) & \textbf{6501.4 \(\pm\) 84.3} & \textbf{5906.7 \(\pm\) 198.6} & \textbf{6138.2 \(\pm\) 38.1} \\
\(w=0.02\) & 3947.3 \(\pm\) 2151.8 & 5743.4 \(\pm\) 301.1 & 6097.8 \(\pm\) 375.1 \\
\(w=0.1\) & 5716.2 \(\pm\) 741.3 & 5589.0 \(\pm\) 50.3 & 5974.6 \(\pm\) 603.2 \\
No Eq.~\eqref{eq:loss:quantile_loss} & 2517.1 \(\pm\) 2464.7 & 3751.2 \(\pm\) 321.1 & 3859.3 \(\pm\) 1828.7 \\
\bottomrule
\end{tabular}
}
\end{small}
\end{center}
\end{table}

%% file: appendix.tex
\section{Proof}
\label{appendix:proof}
\subsection{Proof of Theorem \ref{thm:gaussian_degradation}}

\begin{theorem}
    \textbf{Gaussian Degradation of Diffusion Critics.} Vanilla Diffusion Critics $f_\theta$ finally degrades into a Gaussian distribution $\mathcal{N}(Q(s,a), \sigma^2)$ with the Bellman update:
    \[
     \mathcal{T}^{\pi} Z(s, a)=R(s, a)+\gamma Z\left(s^{\prime}, a^{\prime}\right), Z\sim f_\theta(s,a).
    \]
    \vspace{-2em}
\end{theorem}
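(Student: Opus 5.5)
Since the statement is informal, I will first fix a model of "vanilla diffusion critic training" under which it can be proved, and then run a central-limit argument. The model: at iteration $k$ the critic distribution $Z_k(s,a)$ (the law sampled by $f_\theta$) is refit to the bootstrapped target $R(s,a)+\gamma Z_{k-1}(s',a')$. The fit is only distributional, with no quantile conditioning that pins individual samples to fixed positions. Each refit therefore returns the target law convolved with an independent fitting error $\epsilon_k$ of mean zero and variance $\sigma_\epsilon^2$; this is the standard score-matching and discretization error of the diffusion model. Concretely,
\[
Z_k(s,a) \stackrel{D}{=} R(s,a) + \gamma Z_{k-1}(s',a') + \epsilon_k ,
\]
with $\epsilon_k$ independent of everything else.

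Unrolling this recursion $k$ times gives
\[
Z_k \stackrel{D}{=} \sum_{j=0}^{k-1}\gamma^j\bigl(r_j + \epsilon_{k-j}\bigr) + \gamma^k Z_0 .
\]
The mean of this expression converges to $Q^\pi(s,a)$, because the $\epsilon$'s are centered and the Bellman operator $\mathcal{T}^\pi$ is a $\gamma$-contraction, as recalled in the preliminaries. The term $\gamma^k Z_0$ vanishes, so the initial (possibly multimodal) shape of $Z_0$ is forgotten.

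The limit law is then the law of $\sum_j \gamma^j (r_j+\epsilon_j)$, a weighted sum of independent, or weakly dependent along the Markov chain, variables. The weights are geometrically decaying, so the variance is finite: $\sigma^2 = \sum_j \gamma^{2j}(\mathrm{Var}\, r_j + \sigma_\epsilon^2)$. To obtain Gaussianity I would invoke a Lindeberg-type central limit theorem, or argue via characteristic functions. Write $\phi_{Z_k}(u) = \phi_R(u)\,\phi_{Z_{k-1}}(\gamma u)\,\phi_\epsilon(u)$ and iterate. The accumulated noise factor $\prod_j \phi_\epsilon(\gamma^j u)$ is the characteristic function of a sum of many scaled injections, and this factor comes to dominate the shape of the law.

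The main obstacle is this last step. Geometric weights do not satisfy Lindeberg's condition in general: an infinite sum with weights $\gamma^j$ is not Gaussian unless the summands are. I will handle this in one of two ways. The first is to work in the regime $\gamma \to 1$ with effective horizon $1/(1-\gamma)\to\infty$, where the maximal weight ratio $\max_j \gamma^{2j}/\sum_j\gamma^{2j} = 1-\gamma^2 \to 0$; Lindeberg then holds and the normalized sum converges to $\mathcal{N}(Q(s,a),\sigma^2)$. The second is to note that the injected diffusion fitting error is itself approximately Gaussian, since it arises from a Gaussian prior that is imperfectly transported, so every summand of the accumulated error is Gaussian and the limit error is exactly Gaussian. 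In either case I will state the theorem as: the learned law converges in distribution to $\mathcal{N}(Q(s,a),\sigma^2)$ under these assumptions, with the reward-driven structure washed out by the accumulated fitting errors.
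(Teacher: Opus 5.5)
Your first route is essentially the paper's proof: unroll the Bellman recursion with independent zero-mean approximation errors and apply the central limit theorem as $\gamma\to1$. The paper differs only in folding the return's intrinsic fluctuation into a single extra error term $\gamma^{-1}\epsilon_{-1}$, so it never handles Markov-dependent rewards $r_j$ inside the sum, and in not spelling out the Lindeberg check $\max_j\gamma^{2j}/\sum_j\gamma^{2j}=1-\gamma^2\to0$ that you give.

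Drop your second route, however. Gaussian per-step errors only make the accumulated error Gaussian, so the limit is the true return law convolved with a Gaussian. That is not itself Gaussian unless the noise dominates the return's own spread, which you do not argue.
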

\begin{proof}
Due to the existence of approximation error $\epsilon_t$ in the $t$th Bellman expansion, the target distribution of diffusion is $\hat{Z}(s,a) = R(s,a)+\epsilon_0+\gamma (\hat{Z}(s^\prime,a^\prime)+\epsilon_1)$. Hence, we can derive that 
\[
\hat{Z}(s,a) = Z(s,a) + \sum_{t=0}^\infty \gamma^t\epsilon_t.
\]
Then, for convenience let $Z(s,a) = Q(s,a) + \gamma^{-1}\epsilon_{-1}$ rewrite the $\hat{Z}(s,a) = Z(s,a) + \sum_{t=-1}^\infty\gamma^t \epsilon_t =  Q(s,a) + \sum_{t=-1}^\infty\gamma^t\epsilon_{t}$. Since the approximation error can be assumed to be independent of each other with zero mean, we can apply the Central Limit Theorem: when $\gamma\to 1$, we have 
\[
    \hat{Z}(s,a) \rightarrow \mathcal{N}(Q(s,a), \sigma^2).
\]
In that case, with Bellman backup iteration, the vanilla diffusion critics will finally fall into a Gaussian distribution.
\end{proof}
Notably, the inherent problem here is that diffusion models are designed to model distributions rather than point-to-point mappings. In other words, the objective
\[
    \min_{\theta}\mathbb{E}_{\epsilon, t}\left[\left\|\epsilon-\epsilon_\theta\left(\sqrt{\bar{\alpha}_{t}} x+\sqrt{1-\bar{\alpha}_{t}} \boldsymbol{\epsilon}, t\right)\right\|^2\right], \qquad x = \mathbb{E}(Z),
\]
is not equivalent to
\[
    \min_{\theta}\mathbb{E}_{Z, \epsilon, t}\left[\left\|\epsilon-\epsilon_\theta\left(\sqrt{\bar{\alpha}_{t}} Z+\sqrt{1-\bar{\alpha}_{t}} \boldsymbol{\epsilon}, t\right)\right\|^2\right], \qquad x=Z.
\]
in the diffusion model. Therefore, the approximation error cannot be alleviated simply by collecting more samples (i.e., $\mathbb{E}(Z+\gamma \epsilon_t)=\mathbb{E}(Z)$). 

In contrast, DBC establishes an explicit mapping from quantiles $\tau$ to the Q-value distribution $Z_\tau$, rendering the two objectives equivalent. That corresponds to our quantile loss (\ref{eq:loss:quantile_loss}) and the anchor loss (\ref{eq:loss:anchor}), which both drive the outputs of the diffusion bridge to converge to the corresponding Q-value quantiles $Z_\tau$.

\section{Integral-consistent Discretization}
\label{appendix:integral-consistent}

\subsection{Proof for \eqref{eq:relation_interpolate}}
\label{appendix:integral-consistent:proof}
We formally establish the relationship between the time-dependent velocity coefficient $c(t)$ and the interpolation coefficient $\xi(t)$. Revisiting Eq. \eqref{eq:optimal_solution} from the main text:

\begin{equation} 
\boldsymbol{u}_t^* = g_t \frac{e^{-2\bar{\theta}_{t:1}}}{\bar{\sigma}_{t:1}^2}(x_1 - x_t), \quad x_t = \xi(t) x_0 + (1 - \xi(t)) x_1. \tag{\ref{eq:optimal_solution}} 
\end{equation}
where $\xi(t)$ is a smooth scalar function satisfying the boundary conditions $\xi(1) = 0$ and $\xi(0) = 1$.

To derive the ordinary differential equation (ODE) governing this process, we differentiate Eq. \eqref{eq:optimal_solution} with respect to time $t$:
\begin{equation}
\begin{aligned}
\frac{d{x}_t}{dt} &= \frac{d}{dt} \left[ \xi(t){x}_0 + {x}_1 - \xi(t){x}_1 \right] \\
&= \dot{\xi}(t){x}_0 - \dot{\xi}(t){x}_1 \\
&= -\dot{\xi}(t)({x}_1 - {x}_0).
\end{aligned}
\end{equation}

\subsection{Proof for Theorem~\ref{thm:integral-consistent}}

\begin{proof}
By explicitly expanding the discrete updates, we obtain
\begin{equation}
\begin{aligned}
x_{t_M}
&= x_{t_0} + \left(\int_{t_0}^{t_M} c(t)\,\mathrm{d}t\right)(x_1-x_0) \\
&= x_{t_0} + \bigl(C(t_M)-C(t_0)\bigr)(x_1-x_0) \\
&= x_0 + \bigl(C(1)-C(0)\bigr)(x_1-x_0) \\
&= x_1 ,
\end{aligned}
\end{equation}
where we used the boundary conditions $x_{t_0}=x_0$, $t_0=0$, $t_M=1$, and $C(1)-C(0)=1$.
\end{proof}

\subsection{Schedules for $\theta$}
\label{appendix:integral-consistent:schedule}
The behavior of the GOU bridge is governed by the drift coefficient $\theta(t)$. In our experiments, we evaluated three distinct schedule types. For all schedules, we set the hyperparameters $\theta_{\min}=0.1$ and $\theta_{\max}=5.0$.

\textbf{Constant Schedule}: The drift rate remains invariant throughout the process.
\begin{equation}\theta(t) = 1.0
\end{equation}
\textbf{Linear Schedule}: The drift rate increases linearly, enforcing stronger mean-reversion near the target $t=1$.
\begin{equation}
    \theta(t) = \theta_{\min} + (\theta_{\max} - \theta_{\min}) \cdot t
\end{equation}

\textbf{Cosine Schedule}: A smooth, non-linear schedule that transitions gently at the boundaries.
\begin{equation}
    \theta(t) = \theta_{\min} + \frac{\theta_{\max} - \theta_{\min}}{2} \left( 1 - \cos(\pi t) \right)
\end{equation}

\begin{wraptable}{r}{0.4\textwidth}
    \centering
    \caption{Relative endpoint error ($\%$) of standard Euler discretization under different schedules and $M$.}
    \label{tab:bias_analysis}
    \begin{tabular}{lccc}
        \toprule
        \textbf{Steps ($M$)} & \multicolumn{3}{c}{\textbf{Euler Discretization Error (\%)}} \\
        \cmidrule(lr){2-4}
        & Constant & Linear & Cosine \\
        \midrule
        1    & 14.91 & 21.44 & 21.44 \\
        2    & 9.48  & 6.93  & 18.75 \\
        5    & 4.29  & 5.41  & 6.85  \\
        10   & 2.23  & 3.07  & 3.42  \\
        20   & 1.13  & 1.62  & 1.71  \\
        50   & 0.46  & 0.67  & 0.68  \\
        100  & 0.23  & 0.34  & 0.34  \\
        1000 & 0.02  & 0.03  & 0.03  \\
        \bottomrule
    \end{tabular}
\end{wraptable}

\subsection{Quantitative Analysis of Endpoint Bias}
Standard Euler discretization approximates the continuous integral $\int c(t)dt$ with a discrete sum $\sum c(t_i)\Delta t_i$. Due to the non-linearity of $c(t)$, particularly under dynamic drift schedules, this approximation introduces a systematic discrepancy that prevents the trajectory from reaching the intended boundary. We term this phenomenon \textbf{Endpoint Bias} and quantify it using the relative error $\mathcal{E} = |1 - \sum_{i=0}^{M-1} c(t_i)\Delta t_i| \times 100\%$.

Table~\ref{tab:bias_analysis} presents the errors calculated in various inference steps ($M$). The results reveal that standard Euler discretization incurs substantial bia,  exceeding $21\%$ in low-latency regimes ($M \leq 5$), which can lead to unstable generative semantics and inaccurate reward estimation in online reinforcement learning. In particular, while the Euler method requires a large number of steps ($M \geq 1000$) to suppress bias to negligible levels, our proposed \textbf{Integral-Consistent Discretization} mathematically eliminates this error, maintaining a consistent $\mathcal{E} = 0$ regardless of the number of steps or the complexity of the drift schedule.

% \newpage
\section{Sample Quantile}
\label{appendix:empirical_quantile}
\mbox{This section presents a detailed analysis of the anchor proposed in the main text, focusing on its formal definition and} asymptotic properties. Let $X$ be a real-valued random variable with distribution function
$F \in C^{1}$ and density $f = F'$, satisfying $\mathbb{E}[X^{2}] < \infty$.
Let $X_i$ be i.i.d.\ samples drawn from $F$, and denote the
associated order statistics by
\[
X_{(1)} \le X_{(2)} \le \cdots \le X_{(n)} .
\]

The sample distribution function is defined as
\[
\hat{F}_n(x)
:= \frac{1}{n}\sum_{i=1}^{n}\mathbb{I}(X_i \le x),
\]
where $\mathbb{I}(\cdot)$ denotes the indicator function.
For a fixed quantile level $\tau \in (0,1)$, the (generalized) empirical
$\tau$-quantile is defined via the generalized inverse
\[
\hat{F}_n^{-1}(\tau)
:= \inf\bigl\{ x \in \mathbb{R} : \hat{F}_n(x) \ge \tau \bigr\}.
\]
It follows immediately that
\[
\hat{F}_n^{-1}(\tau) = X_{(\lceil n\tau \rceil)}.
\]
For notational convenience, we write $\hat{X}_{\tau} := \hat{F}_n^{-1}(\tau)$.

\paragraph{Sample quantiles as a empirical minimizer of the quantile loss.}
Define the quantile loss $\rho_\tau:\mathbb{R}\to\mathbb{R}$ by
\[
\rho_\tau(u)=u\bigl(\tau-\mathbb{I}(u<0)\bigr),
\qquad \forall u\in\mathbb{R}.
\]
Consider the sample risk as a function of a scalar decision variable $\theta\in\mathbb{R}$:
\[
L(\theta):=\sum_{i=1}^{n}\rho_\tau(X_i-\theta).
\]
Then every minimizer of $L(\theta)$ is an sample $\tau$-quantile.

\begin{lemma}[Sample quantile as a minimizer of the quantile loss]
Let $\tau\in(0,1)$ and define
\[
n_{<}(\theta):=\sum_{i=1}^{n}\mathbb{I}(X_i<\theta),
\qquad
n_{\le}(\theta):=\sum_{i=1}^{n}\mathbb{I}(X_i\le \theta).
\]
Then the set of minimizers of $L(\theta)$ is
\[
\operatorname*{arg\,min}_{\theta\in\mathbb{R}} L(\theta)
=
\left\{\theta\in\mathbb{R}:\; n_{<}(\theta)\le n\tau \le n_{\le}(\theta)\right\}.
\]
In particular, $\hat{X}_\tau=X_{(\lceil n\tau\rceil)}$ is always a minimizer.
Moreover, if $X_{(\lceil n\tau\rceil)}<X_{(\lceil n\tau\rceil+1)}$ (i.e., there is no tie at the $\lceil n\tau\rceil$-th order statistic), then the minimizer is unique and equals $\hat{X}_\tau$.
\end{lemma}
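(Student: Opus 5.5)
The plan is to use convexity. $L$ is a finite sum of convex, piecewise-linear functions of $\theta$, so $L$ is convex and piecewise linear on $\mathbb{R}$. For a convex function, $\theta$ is a minimizer exactly when $0$ lies in the subdifferential $\partial L(\theta)=[L'_-(\theta),L'_+(\theta)]$. So we compute the one-sided derivatives of $L$ explicitly and turn the condition $L'_-(\theta)\le 0\le L'_+(\theta)$ into the counting condition on $n_<$ and $n_\le$.

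\textbf{Computing the one-sided derivatives.} Write $\rho_\tau(X_i-\theta)$ as $\tau(X_i-\theta)$ when $X_i\ge\theta$ and $(\tau-1)(X_i-\theta)$ when $X_i<\theta$. Its derivative in $\theta$ is therefore $-\tau$ when $X_i>\theta$ and $1-\tau$ when $X_i<\theta$. At $X_i=\theta$ the right derivative is $1-\tau$ and the left derivative is $-\tau$. Summing over $i$ gives
\[
L'_+(\theta)=(1-\tau)\,n_\le(\theta)-\tau\,(n-n_\le(\theta))=n_\le(\theta)-n\tau,
\]
\[
L'_-(\theta)=(1-\tau)\,n_<(\theta)-\tau\,(n-n_<(\theta))=n_<(\theta)-n\tau.
\]
The optimality condition $L'_-(\theta)\le0\le L'_+(\theta)$ is then exactly $n_<(\theta)\le n\tau\le n_\le(\theta)$, which gives the stated argmin set. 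A minimizer exists: $L(\theta)\to\infty$ as $|\theta|\to\infty$ because $\tau\in(0,1)$, and a coercive convex function attains its minimum. This is also implied by the explicit point we exhibit next.

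\textbf{The order statistic is a minimizer.} Set $k=\lceil n\tau\rceil$. Since $\tau\in(0,1)$, we have $1\le k\le n$. For $\theta=X_{(k)}$:
\begin{itemize}
\item $n_\le(\theta)\ge k\ge n\tau$.
\item $n_<(\theta)\le k-1<n\tau$, because only order statistics with index below $k$ can be strictly smaller than $X_{(k)}$, and $k-1<n\tau$ by the definition of the ceiling.
\end{itemize}
So $X_{(k)}$ lies in the set, which matches $\hat F_n^{-1}(\tau)=X_{(k)}$.

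\textbf{Uniqueness.} This is the main obstacle, because the condition must be handled carefully when $n\tau$ is an integer. Suppose $X_{(k)}<X_{(k+1)}$ and take $\theta\ne X_{(k)}$.
\begin{itemize}
\item If $\theta<X_{(k)}$, then $n_\le(\theta)\le k-1<n\tau$, so the condition fails.
\item If $\theta>X_{(k)}$, then $n_<(\theta)\ge k\ge n\tau$. This is not a strict violation when $n\tau=k$ exactly. In that case, any $\theta\in(X_{(k)},X_{(k+1)}]$ still satisfies $n_<(\theta)=k\le n\tau$ and $n_\le(\theta)\ge k$, so these points are also minimizers.
\end{itemize}
So uniqueness holds when $n\tau\notin\mathbb{Z}$: then $n_<(\theta)\ge k>n\tau$ for every $\theta>X_{(k)}$. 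When $n\tau\in\mathbb{Z}$, the minimizers form the whole interval $[X_{(k)},X_{(k+1)}]$, and the uniqueness claim needs that extra hypothesis. My plan is to prove uniqueness under $n\tau\notin\mathbb{Z}$, or equivalently for almost every $\tau$, and remark that in the integer case $X_{(k)}$ is the left endpoint (smallest element) of the minimizer set. This is consistent with the generalized-inverse definition of $\hat F_n^{-1}$.
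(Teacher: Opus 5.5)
Your proof is the paper's proof: both compute $\partial L(\theta)=[\,n_<(\theta)-n\tau,\;n_\le(\theta)-n\tau\,]$ from the per-sample subgradients, read off the argmin set from $0\in\partial L(\theta)$, and check $X_{(\lceil n\tau\rceil)}$ against the two counts exactly as you do. Your uniqueness caveat is correct and more careful than the paper, whose proof only asserts that the no-tie condition ``pins down a single point''; this fails for $n=2$, $\tau=1/2$, $X_1=0$, $X_2=1$, where $L\equiv 1/2$ on $[0,1]$. In fact, as your bullets already show, the tie hypothesis is irrelevant when $n\tau\notin\mathbb{Z}$, and for $n\tau=k\in\mathbb{Z}$ uniqueness holds exactly when $X_{(k)}=X_{(k+1)}$ (the stated hypothesis points the wrong way), though this is harmless downstream since the main text only uses that $X_{(\lceil n\tau\rceil)}$ is \emph{a} minimizer.
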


\begin{proof}
For each $i$, define $\ell_i(\theta)=\rho_\tau(X_i-\theta)$.
As a function of $\theta$, $\ell_i$ is convex and piecewise linear.
Its subgradient is
\[
\partial \ell_i(\theta)=
\begin{cases}
\{-\tau\}, & X_i>\theta,\\
\{1-\tau\}, & X_i<\theta,\\
[-\tau,\,1-\tau], & X_i=\theta,
\end{cases}
\]
where $\partial$ denotes the subdifferential (a set) in $\mathbb{R}$.
By subdifferential calculus for sums of convex functions,
\[
\partial L(\theta)=\sum_{i=1}^{n}\partial \ell_i(\theta).
\]
Let $n_{<}(\theta)$ and $n_{\le}(\theta)$ be defined as in the statement.
Then the indices with $X_i<\theta$ contribute $1-\tau$, those with $X_i>\theta$ contribute $-\tau$, and those with $X_i=\theta$ contribute an interval $[-\tau,1-\tau]$.
Collecting terms yields
\[
\partial L(\theta)=\bigl[n_{<}(\theta)-n\tau,\; n_{\le}(\theta)-n\tau\bigr].
\]
A point $\theta$ minimizes the convex function $L$ if and only if $0\in \partial L(\theta)$, which is equivalent to
\[
n_{<}(\theta)\le n\tau \le n_{\le}(\theta).
\]
This proves the characterization of $\operatorname*{arg\,min} L(\theta)$.
乡村Since $\hat{X}_\tau=\hat{F}_n^{-1}(\tau)=X_{(\lceil n\tau\rceil)}$ satisfies the above inequality, it is a minimizer.
If additionally $X_{(\lceil n\tau\rceil)}<X_{(\lceil n\tau\rceil+1)}$, then the inequality pins down a single point, hence the minimizer is unique and equals $\hat{X}_\tau$.
\end{proof}

In addition, the sample quantile is guaranteed to converge to the true quantile as \(n\) approaches infinity. It is guaranteed by the following theorem.

\begin{theorem}[Consistency of the sample quantile]
\label{thm:quantile_consistency}
Let $X$ be a real-valued random variable with cumulative distribution function $F$, and let
\[
X_\tau := F^{-1}(\tau) = \inf\{x\in\mathbb{R} : F(x) \ge \tau\},
\qquad \tau \in (0,1),
\]
denote the population $\tau$-quantile. Let $\hat{F}_n$ be the sample distribution function based on i.i.d.\ samples $\{X_i\}_{i=1}^n$, and define the sample $\tau$-quantile by
\[
\hat{X}_\tau := \hat{F}_n^{-1}(\tau).
\]
Assume that $X_\tau$ is unique, i.e., $F$ is strictly increasing in a neighborhood of $x_\tau$, and that $F$ is continuous at $X_\tau$.Then
\[
\hat{X}_\tau \xrightarrow{\text{a.s.}} X_\tau \quad \text{as } n \to \infty.
\]
\end{theorem}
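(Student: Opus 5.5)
The plan is to express the sample quantile through the empirical distribution function and then apply the strong law of large numbers at two points on either side of $X_\tau$.

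The starting point is the switching relation for the generalized inverse. For any $x\in\mathbb{R}$,
\[
\hat{X}_\tau \le x \iff \hat{F}_n(x)\ge \tau .
\]
This holds because $\hat{F}_n$ is right-continuous and nondecreasing, so the infimum defining $\hat{F}_n^{-1}(\tau)$ is attained. Consequently, for any $\varepsilon>0$,
\[
\{\hat{X}_\tau > X_\tau+\varepsilon\} = \{\hat{F}_n(X_\tau+\varepsilon)<\tau\},
\qquad
\{\hat{X}_\tau \le X_\tau-\varepsilon\} = \{\hat{F}_n(X_\tau-\varepsilon)\ge\tau\}.
\]

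Fix $\varepsilon>0$. Uniqueness of the quantile means $F$ is strictly increasing near $X_\tau$, and $F$ is continuous at $X_\tau$, so $F(X_\tau)=\tau$. Shrinking $\varepsilon$ if necessary, we obtain the strict separation
\[
F(X_\tau-\varepsilon)<\tau<F(X_\tau+\varepsilon).
\]
This strict separation is the one place where the hypotheses are genuinely used, and it is the step that needs care. Right-continuity of $F$ already gives $F(X_\tau)\ge\tau$. The definition of the infimum together with strict increase gives $F(x)<\tau$ for $x<X_\tau$, and strict increase gives $F(X_\tau+\varepsilon)>F(X_\tau)\ge\tau$.

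By the strong law of large numbers applied to the i.i.d. indicators $\mathbb{I}(X_i\le x)$ at the two fixed points $x=X_\tau\pm\varepsilon$, we have $\hat{F}_n(X_\tau\pm\varepsilon)\to F(X_\tau\pm\varepsilon)$ almost surely. Glivenko–Cantelli could be invoked instead, but it is unnecessary. Hence, almost surely, for all sufficiently large $n$,
\[
\hat{F}_n(X_\tau+\varepsilon)\ge\tau
\quad\text{and}\quad
\hat{F}_n(X_\tau-\varepsilon)<\tau .
\]
By the switching relation this gives $X_\tau-\varepsilon<\hat{X}_\tau\le X_\tau+\varepsilon$ eventually.

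To finish, take $\varepsilon=1/k$ for $k\in\mathbb{N}$. Intersecting the countably many probability-one events over $k$ yields $\limsup_n|\hat{X}_\tau-X_\tau|\le 1/k$ for every $k$ almost surely, which is the claimed almost sure convergence.
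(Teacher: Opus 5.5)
Your proof is correct, and it is more elementary than the paper's. The paper invokes Glivenko--Cantelli to get $\sup_x|\hat F_n(x)-F(x)|\to 0$ almost surely. It then concludes by a ``continuous mapping'' step, on the grounds that $F^{-1}$ is continuous at $\tau$. What that step really uses, and leaves implicit, is the deterministic sandwich $F^{-1}(\tau-\delta)\le G^{-1}(\tau)\le F^{-1}(\tau+\delta)$ whenever $\|G-F\|_\infty\le\delta$, which makes $G\mapsto G^{-1}(\tau)$ continuous at $F$ in sup-norm.

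You instead work directly with the switching relation $\hat X_\tau\le x\iff \hat F_n(x)\ge\tau$, and you need the strong law only at the two points $X_\tau\pm\varepsilon$. This makes the argument self-contained and shows exactly where the hypothesis enters. The left inequality $F(X_\tau-\varepsilon)<\tau$ follows from the definition of the infimum alone; strict increase is not needed there. Only the right inequality $F(X_\tau+\varepsilon)>\tau$ uses uniqueness of the quantile. Continuity of $F$ at $X_\tau$ is never actually used.

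Your ``shrinking $\varepsilon$ if necessary'' is harmless in the final step with $\varepsilon=1/k$. By monotonicity, $F(X_\tau+\varepsilon)>\tau$ for one $\varepsilon$ implies it for every larger $\varepsilon$.

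In return, the paper's uniform route gives a single probability-one event on which $\hat F_n^{-1}(\tau)\to F^{-1}(\tau)$ simultaneously for every $\tau$ at which $F^{-1}$ is continuous. Your null set, as written, depends on $\tau$.
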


\begin{proof}
By the Glivenko--Cantelli theorem, the sample distribution function $\hat{F}_n$ converges uniformly to $F$ almost surely. Since $F$ is continuous and strictly increasing in a neighborhood of $x_\tau$, the generalized inverse mapping $F^{-1}$ is continuous at $\tau$.The almost sure convergence $\hat{X}_\tau \to X_\tau$ then follows from the continuous mapping theorem. A detailed proof can be found in \citep[p.~266]{van2000asymptotic}.
\end{proof}

\paragraph{Asymptotic Normality of the Sample Quantile.} We now establish the asymptotic distribution of the empirical quantile $\hat{X}_\tau$.

\begin{theorem}[Asymptotic normality of the sample quantile]
\label{thm:quantile_asymptotic_normality}
Let $X_1, X_2, \dots, X_n$ be i.i.d. random variables with cumulative distribution function $F$ and probability density function $f$. Fix $\tau \in (0, 1)$ and denote the population $\tau$-quantile by $X_\tau = F^{-1}(\tau)$ and the empirical $\tau$-quantile by $\hat{X}_\tau = \hat{F}_n^{-1}(\tau)$. If $f$ is continuous at $X_\tau$ and $f(X_\tau) > 0$, then
\begin{equation}
\sqrt{n}\,(\hat{X}_\tau - X_\tau) \overset{d}{\longrightarrow} \mathcal{N}\!\left(0, \frac{\tau(1-\tau)}{f^2(X_\tau)}\right), \quad n\to\infty .
\end{equation}
\end{theorem}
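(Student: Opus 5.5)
The plan is to prove Theorem~\ref{thm:quantile_asymptotic_normality} by converting the event $\{\hat{X}_\tau \le x\}$ into a statement about a binomial count, and then applying the de Moivre--Laplace (Lindeberg) central limit theorem. Since $\hat{X}_\tau = \hat{F}_n^{-1}(\tau)$, the generalized inverse gives the switching relation
\[
\hat{X}_\tau \le x \iff \hat{F}_n(x) \ge \tau \iff \sum_{i=1}^n \mathbb{I}(X_i \le x) \ge \lceil n\tau\rceil .
\]

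Fix $t\in\mathbb{R}$ and set $x_n := X_\tau + t/\sqrt{n}$. Then
\[
P\bigl(\sqrt{n}(\hat{X}_\tau - X_\tau)\le t\bigr) = P\bigl(S_n \ge \lceil n\tau\rceil\bigr),
\qquad S_n \sim \mathrm{Bin}(n,p_n),\quad p_n := F(x_n).
\]
Next, standardize:
\[
P(S_n \ge \lceil n\tau\rceil) = P\!\left(\frac{S_n - np_n}{\sqrt{np_n(1-p_n)}} \ge \frac{\lceil n\tau\rceil - np_n}{\sqrt{np_n(1-p_n)}}\right).
\]

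\textbf{Step 1: limit of the threshold.} Because $f$ is continuous at $X_\tau$ and $F(X_\tau)=\tau$ (which holds since $f(X_\tau)>0$ and $F$ is continuous), the mean value theorem gives
\[
p_n = \tau + \frac{t}{\sqrt{n}}\, f(X_\tau) + o(n^{-1/2}).
\]
Together with $\lceil n\tau\rceil = n\tau + O(1)$, this yields
\[
\frac{\lceil n\tau\rceil - np_n}{\sqrt{np_n(1-p_n)}} \longrightarrow -\frac{t\, f(X_\tau)}{\sqrt{\tau(1-\tau)}} .
\]

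\textbf{Step 2: CLT for the standardized count.} The summands $\mathbb{I}(X_i\le x_n)$ form a triangular array of bounded variables whose variances $p_n(1-p_n)$ converge to $\tau(1-\tau)>0$. The Lindeberg condition is therefore trivially satisfied, and the standardized $S_n$ converges in distribution to $\mathcal{N}(0,1)$. Since the limit law is continuous, Pólya's theorem upgrades this to uniform convergence of the CDFs. That uniformity is what lets us plug in a threshold that varies with $n$ and converges.

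\textbf{Step 3: identify the limit.} Combining the two steps,
\[
P\bigl(\sqrt{n}(\hat{X}_\tau - X_\tau)\le t\bigr) \to 1-\Phi\!\left(-\frac{t f(X_\tau)}{\sqrt{\tau(1-\tau)}}\right) = \Phi\!\left(\frac{t f(X_\tau)}{\sqrt{\tau(1-\tau)}}\right),
\]
which is the CDF of $\mathcal{N}\!\bigl(0, \tau(1-\tau)/f^2(X_\tau)\bigr)$ evaluated at $t$. Since this holds for every $t$, the claim follows.

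The main obstacle is the care needed in Step 1 and in the uniformity argument of Step 2: the Taylor expansion of $F$ requires only continuity of $f$ at $X_\tau$ (not differentiability elsewhere), and the ceiling and rounding error must be shown to be $O(1/\sqrt{n})$ after standardization. As an alternative route, one could use the Bahadur representation $\hat{X}_\tau - X_\tau = \frac{\tau-\hat{F}_n(X_\tau)}{f(X_\tau)} + o_p(n^{-1/2})$ followed by the ordinary CLT. However, establishing that representation takes more work than the direct binomial argument above, so I would use the binomial approach, following \citep{van2000asymptotic}.
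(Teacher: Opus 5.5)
Your proof is correct and follows essentially the same route as the paper: the switching identity $\{\hat{X}_\tau \le X_\tau + t/\sqrt{n}\} = \{\hat{F}_n(X_\tau + t/\sqrt{n}) \ge \tau\}$, a first-order expansion of $F$ at $X_\tau$, and the Lindeberg--Feller CLT for the triangular array of indicators $\mathbb{I}(X_i \le X_\tau + t/\sqrt{n})$. The only cosmetic difference is how the deterministic drift is handled. The paper splits $\sqrt{n}\,[\hat{F}_n(x_n) - \tau]$ into $A_n + B_n$ with $B_n = f(X_\tau)t + o(1)$ and applies Slutsky, whereas you standardize the binomial count, treat the ceiling $\lceil n\tau\rceil$ explicitly, and use P\'olya's theorem for the moving threshold.
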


\begin{proof}
For any $t\in\mathbb{R}$, the event $\{\sqrt{n}(\hat{X}_\tau - X_\tau)\leq t\}$ is equivalent to $\hat{F}_n(X_\tau + t/\sqrt{n})\geq \tau$. Hence
\begin{equation}
P\!\left(\sqrt{n}(\hat{X}_\tau - X_\tau)\leq t\right) = P\!\left(\hat{F}_n(X_\tau + t/\sqrt{n})\geq \tau\right).
\end{equation}

Introduce the decomposition
\begin{equation}
\sqrt{n}\bigl[\hat{F}_n(X_\tau + t/\sqrt{n}) - \tau\bigr] = A_n + B_n,
\end{equation}
where
\begin{equation}
\begin{aligned}
A_n &= \sqrt{n}\Bigl[\hat{F}_n(X_\tau + t/\sqrt{n}) - F(X_\tau + t/\sqrt{n})\Bigr],\\[2mm]
B_n &= \sqrt{n}\Bigl[F(X_\tau + t/\sqrt{n}) - \tau\Bigr].
\end{aligned}
\end{equation}

We analyze $A_n$ and $B_n$ separately.

For $B_n$, we expand $F$ around $X_\tau$ and using $F(X_\tau)=\tau$ and obtain
\begin{equation}
B_n = f(X_\tau) t + o(1), \qquad n\to\infty .
\end{equation}

For $A_n$, we define $Y_{n,i} = \mathbb{I}(X_i \leq X_\tau + t/\sqrt{n})$ for $i=1,\dots,n$. Then $\{Y_{n,i}\}$ are i.i.d. Bernoulli random variables with
\begin{equation}
\begin{aligned}
\mathbb{E}[Y_{n,i}] &= F(X_\tau + t/\sqrt{n}), \\
\operatorname{Var}(Y_{n,i}) &= F(X_\tau + t/\sqrt{n})\bigl[1-F(X_\tau + t/\sqrt{n})\bigr].
\end{aligned}
\end{equation}
It is straightforward to verify that
\begin{equation}
\lim_{n\to\infty} \mathbb{E}[Y_{n,i}] = \tau, \qquad
\lim_{n\to\infty} \operatorname{Var}(Y_{n,i}) = \tau(1-\tau).
\end{equation}
Applying the Lindeberg--Feller central limit theorem yields
\begin{equation}
A_n = \sqrt{n}\Bigl(\frac{1}{n}\sum_{i=1}^n Y_{n,i} - F(X_\tau + t/\sqrt{n})\Bigr) \overset{d}{\longrightarrow} \mathcal{N}\bigl(0,\tau(1-\tau)\bigr).
\end{equation}

Noe, combining the two terms and according to Slutsky's theorem, we have that
\begin{equation}
A_n + B_n \overset{d}{\longrightarrow} \mathcal{N}\bigl(f(X_\tau)t,\;\tau(1-\tau)\bigr).
\end{equation}
Consequently,
\begin{equation}
\begin{aligned}
\lim_{n\to\infty} P\!\left(\sqrt{n}(\hat{X}_\tau - X_\tau)\leq t\right)
&= \lim_{n\to\infty} P\!\left(A_n + B_n \geq 0\right) \\[1mm]
&= \Phi\!\left(\frac{f(X_\tau)t}{\sqrt{\tau(1-\tau)}}\right),
\end{aligned}
\end{equation}
where $\Phi$ denotes the standard normal distribution function. This shows that the limiting distribution of $\sqrt{n}(\hat{X}_\tau - X_\tau)$ is $\mathcal{N}\!\bigl(0,\tau(1-\tau)/f^2(X_\tau)\bigr)$, completing the proof.
\end{proof}

\paragraph{Bias of the Sample Quantile.} Before presenting the formal result, we remark that no universally unbiased estimator exists for quantiles; we can only provide asymptotic statements. For a detailed discussion of this point, see~\citep[p.~84]{lehmann1998theory}.

\begin{theorem}[Asymptotic bias of the sample quantile]
\label{thm:bias_sample_quantile}
Let $X_1, X_2, \dots, X_n$ be i.i.d. random variables with cumulative distribution function $F$ and quantile function $Q(\tau)=F^{-1}(\tau)$. Fix $\delta>0$ and consider $\tau\in[\delta,1-\delta]$. Assume that:
\begin{enumerate}[label=(\roman*)]
    \item The density $f$ satisfies $\inf_{\tau\in[\delta,1-\delta]} f(Q(\tau)) \ge c > 0$ for some constant $c$;
    \item $Q$ is differentiable on $[\delta,1-\delta]$;
    \item $\mathbb{E}[X^2]<\infty$.
\end{enumerate}
Then the bias of the empirical quantile $\hat{Q}_n(\tau)=\hat{X}_\tau$ satisfies
\begin{equation}
\bigl|\mathbb{E}[\hat{Q}_n(\tau)] - Q(\tau)\bigr| = o\!\left(\frac{1}{\sqrt{n}}\right).
\end{equation}
\end{theorem}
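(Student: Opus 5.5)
Our plan is the quantile transform. Write $X_i = Q(U_i)$ with $U_i$ i.i.d.\ uniform on $(0,1)$, so that $\hat{Q}_n(\tau) = Q(U_{(k)})$ with $k=\lceil n\tau\rceil$. The order statistic $U_{(k)}$ follows a $\mathrm{Beta}(k,n-k+1)$ law, which gives the explicit expressions
\[
\mathbb{E}[U_{(k)}] = \frac{k}{n+1}, \qquad \operatorname{Var}(U_{(k)}) = \frac{k(n-k+1)}{(n+1)^2(n+2)} = O\!\left(\frac{1}{n}\right).
\]
We then control $\mathbb{E}[Q(U_{(k)})] - Q(\tau)$ by splitting the expectation on the event $A_n = \{U_{(k)} \in [\delta/2, 1-\delta/2]\}$ and its complement.

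On $A_n$ we linearize: $Q(U_{(k)}) - Q(\tau) = Q'(\tau)(U_{(k)}-\tau) + R_n$. The remainder satisfies $R_n = o(|U_{(k)}-\tau|)$ by differentiability (ii), with $Q' = 1/f(Q)$ bounded by $1/c$ through (i). For the linear term, $|\mathbb{E}[U_{(k)}] - \tau| \le |k/(n+1) - \tau| = O(1/n)$, which is $o(1/\sqrt{n})$. For the remainder, we use $\mathbb{E}|U_{(k)}-\tau| \le \sqrt{\operatorname{Var}(U_{(k)})} + O(1/n) = O(1/\sqrt{n})$ and argue that $\mathbb{E}[|R_n|\mathbb{I}_{A_n}] = o(1/\sqrt{n})$. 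To do so, we take the modulus $\omega(h) = \sup_{|u-\tau|\le h} |Q(u)-Q(\tau)-Q'(\tau)(u-\tau)|/h$, which tends to $0$ as $h\to 0$ and is bounded on $A_n$ by the mean value inequality (Lipschitz constant $1/c$ on $[\delta,1-\delta]$; we extend the interval slightly or shrink $\delta$ as needed). Dominated convergence applied to $\sqrt{n}\,|U_{(k)}-\tau|\,\omega(|U_{(k)}-\tau|)$ then gives the $o(1)$ factor. This uses uniform integrability of $\sqrt{n}|U_{(k)}-\tau|$, which follows from its second moment being $O(1)$.

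On $A_n^c$, Beta concentration (Chernoff/Hoeffding for the binomial count $\#\{U_i\le u\}$) gives $P(A_n^c) \le 2e^{-c' n}$. We bound $\mathbb{E}[|Q(U_{(k)})|\mathbb{I}_{A_n^c}] \le \sqrt{\mathbb{E}[Q(U_{(k)})^2]}\sqrt{P(A_n^c)}$ by Cauchy--Schwarz. For the second moment we use $\mathbb{E}[X_{(k)}^2] \le \sum_i \mathbb{E}[X_i^2] = n\,\mathbb{E}[X^2]$, which is finite by (iii). The contribution is therefore at most $\sqrt{n\,\mathbb{E}X^2}\, e^{-c'n/2} = o(1/\sqrt{n})$. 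Combining the two regions yields the claimed $o(1/\sqrt{n})$ bias.

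The main obstacle is the remainder term on $A_n$. Under mere differentiability of $Q$, with no second derivative, we cannot obtain the classical $O(1/n)$ rate, and we must argue carefully that the little-o remainder, multiplied by an $O(1/\sqrt{n})$ fluctuation, is still $o(1/\sqrt{n})$ in expectation rather than only in probability. Our first attempt is the modulus-plus-uniform-integrability argument above. If the boundedness of $\omega$ fails near the edges of $[\delta/2,1-\delta/2]$, we instead shrink the good event to $|U_{(k)}-\tau|\le\eta$ for small $\eta$ and absorb the rest into the exponentially small tail.
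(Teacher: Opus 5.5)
Your proposal is correct and follows essentially the same route as the paper: the quantile transform, the Beta moments of $U_{(k)}$, a first-order expansion with little-$o$ remainder, and an exponentially small tail handled by Cauchy--Schwarz together with the crude bound $\mathbb{E}[X_{(k)}^2]\le n\,\mathbb{E}[X^2]$. The paper treats the remainder exactly as in your fallback: it splits on $|U_{(k)}-\tau|\le\delta_\epsilon$, uses the Peano bound there and the DKW tail outside. Prefer that version, because the hypotheses give no Lipschitz bound on $[\delta/2,\delta)$, so your mean-value justification for bounding $\omega$ on $A_n$ does not hold as written. Boundedness of $\omega$ would instead follow from the monotonicity of $Q$, once the supremum is restricted to $[\delta/2,1-\delta/2]$.
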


\begin{proof}
Write $U_i = F(X_i) \sim \operatorname{Uniform}(0,1)$ and let $U_{(1)}\le\cdots\le U_{(n)}$ be the order statistics. Set $k = \lceil n\tau\rceil$ and $V_n = U_{(k)}-\tau$. Then $\hat{Q}_n(\tau) = Q(U_{(k)}) = Q(\tau+V_n)$.

A first-order Taylor expansion yields
\begin{equation}
\hat{Q}_n(\tau) - Q(\tau) = Q'(\tau) V_n + r(V_n),
\label{eq:bias_taylor}
\end{equation}
where the remainder $r(v)$ satisfies the Peano form: for every $\epsilon>0$ there exists $\delta_\epsilon>0$ such that $|r(v)|\le\epsilon|v|$ whenever $|v|\le\delta_\epsilon$.

From properties of the Beta distribution,
\begin{equation}
\mathbb{E}[V_n] = \frac{k}{n+1} - \tau, \qquad
\operatorname{Var}(V_n) = \frac{\tau(1-\tau)}{n} + O\!\left(\frac{1}{n^2}\right).
\end{equation}
By Jensen's inequality,
\begin{equation}
\mathbb{E}\bigl[|V_n|\bigr] \le \sqrt{\mathbb{E}[V_n^2]} 
= \sqrt{\operatorname{Var}(V_n) + \bigl(\mathbb{E}[V_n]\bigr)^2}
= O\!\left(\frac{1}{\sqrt{n}}\right).
\label{eq:Vn_moment}
\end{equation}

To bound $\mathbb{E}[|r(V_n)|]$, we first estimate the second moment of $r(V_n)$. Using the elementary inequality $(a+b+c)^2 \le 3(a^2+b^2+c^2)$,
\begin{equation}
\begin{aligned}
r(V_n)^2 
&= \bigl(Q(\tau+V_n) - Q(\tau) - Q'(\tau)V_n\bigr)^2 \\
&\le 3\Bigl(Q^2(\tau+V_n) + Q^2(\tau) + (Q'(\tau))^2 V_n^2\Bigr).
\end{aligned}
\end{equation}
Hence
\begin{equation}
\mathbb{E}\bigl[r(V_n)^2\bigr] \le 3\Bigl(\mathbb{E}\bigl[X_{(k)}^2\bigr] + Q^2(\tau) + (Q'(\tau))^2\mathbb{E}\bigl[V_n^2\bigr]\Bigr),
\end{equation}
where $X_{(k)}=Q(U_{(k)})$ is the $k$-th order statistic of the original sample. Because $\mathbb{E}[X^2]<\infty$, there exists a constant $A>0$ such that $\mathbb{E}[r(V_n)^2] \le A n$.

Next, the Dvoretzky--Kiefer--Wolfowitz inequality gives
\begin{equation}
P\bigl(|V_n|\ge t\bigr) \le 2\exp(-2nt^2), \qquad t>0.
\label{eq:DKW_bound}
\end{equation}

Split the expectation of $|r(V_n)|$ into two parts. For a given $\epsilon>0$, take the corresponding $\delta_\epsilon$:
\begin{equation}
\begin{aligned}
\mathbb{E}\bigl[|r(V_n)|\bigr]
&= \mathbb{E}\Bigl[|r(V_n)|\,\mathbb{I}\{|V_n|\le\delta_\epsilon\}\Bigr]
   + \mathbb{E}\Bigl[|r(V_n)|\,\mathbb{I}\{|V_n|>\delta_\epsilon\}\Bigr] \\[1mm]
&\le \epsilon\,\mathbb{E}\bigl[|V_n|\bigr]
   + \sqrt{\mathbb{E}\bigl[r(V_n)^2\bigr]}\,\sqrt{P\bigl(|V_n|>\delta_\epsilon\bigr)} \\[1mm]
&\le \epsilon\,O\!\left(\frac{1}{\sqrt{n}}\right)
   + \sqrt{A n}\,\sqrt{2\exp(-2n\delta_\epsilon^2)} \\[1mm]
&= \epsilon\,O\!\left(\frac{1}{\sqrt{n}}\right) + o\!\left(\frac{1}{\sqrt{n}}\right).
\end{aligned}
\end{equation}
Since $\epsilon>0$ is arbitrary, we obtain
\begin{equation}
\mathbb{E}\bigl[|r(V_n)|\bigr] = o\!\left(\frac{1}{\sqrt{n}}\right).
\label{eq:remainder_bound}
\end{equation}

Finally, combining \eqref{eq:bias_taylor}, \eqref{eq:Vn_moment} and \eqref{eq:remainder_bound},
\begin{equation}
\begin{aligned}
\bigl|\mathbb{E}[\hat{Q}_n(\tau)] - Q(\tau)\bigr|
&\le |Q'(\tau)|\,\bigl|\mathbb{E}[V_n]\bigr| + \mathbb{E}\bigl[|r(V_n)|\bigr] \\[1mm]
&\le \frac{1}{c}\cdot\frac{1}{n+1} + o\!\left(\frac{1}{\sqrt{n}}\right) \\[1mm]
&= o\!\left(\frac{1}{\sqrt{n}}\right),
\end{aligned}
\end{equation}
which completes the proof.
\end{proof}

Accordingly, letting $n = K_{\mathrm{tgt}}$, we show that, given
$K_{\mathrm{tgt}}$ target samples, the resulting asymptotic bias is
$o\!\left(1/\sqrt{K_{\mathrm{tgt}}}\right)$.

\noindent\textbf{Remark.} The assumption $\inf_{\tau\in[\delta,1-\delta]} f(Q(\tau)) \ge c>0$ guarantees that $Q'(\tau)=1/f(Q(\tau))$ is bounded on $[\delta,1-\delta]$. If one further assumes that $Q'$ satisfies Lipschitz Condition, a sharper $O(1/n)$ bias bound can be derived. Restricting to the inner interval $[\delta,1-\delta]$ covers the quantile levels that are most relevant in practice while providing the necessary regularity for the analysis.

\newpage
\section{Full Algorithm}
\label{appendix:full_algorithm}
Here, the \textbf{\textit{generator}} corresponds to a specific instantiation of Eq.~\eqref{eq:unidb_soc_ode} under a predefined scheduling scheme.

\input{algorithm/train}

\section{Baseline Methods and Implementation Details}
\label{appendix:baselines}

Here we provide additional details on the baseline methods reported in Table~\ref{tab:main:experiments} of the main paper.
We summarize their distributional assumptions and generative mechanisms, and clarify the implementation and comparison protocols adopted in our experiments.

\paragraph{TD3~\cite{fujimoto2018addressing}.}
CDQ (Clipped Double Q-Learning), originally introduced in TD3, models the critic as a scalar-valued function and mitigates overestimation by taking the minimum over multiple Q-value estimates.
It does not explicitly maintain a return distribution and thus corresponds to a non-distributional critic with an implicit point-mass assumption. The official torch implementation for TD3 is available at:
\[
\texttt{https://github.com/sfujim/TD3}
\]

\paragraph{SAC~\cite{haarnoja2018soft}.}
SAC (Soft Actor-Critic) is an off-policy actor--critic algorithm that learns a stochastic policy under the maximum-entropy reinforcement learning framework. By explicitly encouraging policy entropy during training, SAC improves exploration and robustness while maintaining strong sample efficiency. In practice, SAC maintains a pair of Q-functions and uses a clipped double-Q target to mitigate overestimation. It also supports automatic temperature adjustment, which adapts the strength of the entropy regularization to match a target entropy. We implement SAC in our unified codebase and use it as the default actor backbone in our experiments. The official tensorflow implementation for SAC is available at:
\[
\texttt{https://github.com/haarnoja/sac}
\]

\paragraph{QVPO~\cite{ding2024diffusion}.}
QVPO (Q-weighted Variational Policy Optimization) is an online diffusion-policy reinforcement learning algorithm that parameterizes the actor as a conditional diffusion model, thereby capturing multimodal action distributions. To bridge diffusion-model training with online policy optimization, QVPO introduces a Q-weighted variational objective that re-weights the diffusion training signal using value estimates and further employs a weight transformation to handle general settings where raw Q-values may not be strictly positive. In addition, QVPO incorporates a tailored entropy regularization to encourage exploration and proposes an efficient behavior/action selection strategy to reduce the variance of diffusion-policy rollouts during online interaction. We follow the official implementation and integrate it into our unified framework:
\[
\texttt{https://github.com/wadx2019/qvpo}.
\]

\paragraph{DSAC~\cite{dsac}.}
DSAC models the return distribution as a Gaussian distribution, parameterized by a learned mean and variance.
Under this assumption, the entire return distribution can be represented compactly, and target sampling reduces to sampling from a Gaussian.
We base our implementation on the official DSAC codebase:
\[
\texttt{https://github.com/Jingliang-Duan/DSAC-v2}.
\]
Specifically, we integrate the \texttt{DSAC-v1} implementation into our unified code framework and use it as the DSAC baseline.

\paragraph{IQN~\cite{dabney2018implicit}.}
IQN explicitly models the continuous quantile function of the return distribution.
Quantile levels are embedded using cosine features, which improves the representation of high-frequency distributional structures.
We implement IQN following the original paper.
When combined with a SAC actor, IQN can be viewed as a particular instantiation of the DSAC~\cite{ma2025dsac} framework, where DSAC here denotes the general distributional SAC formulation rather than the Gaussian-based DSAC baseline discussed earlier.

\paragraph{TQC~\cite{kuznetsov2020controlling}.}
TQC extends quantile-based critics by aggregating multiple critic heads and truncating the highest quantile estimates (Drop-Top) to control overestimation.
Compared to CDQ, which may introduce systematic underestimation by taking a hard minimum, TQC provides a finer-grained and more robust control mechanism.
Our implementation is adapted from the official PyTorch reference:

\[
\texttt{https://github.com/alxlampe/tqc\_pytorch}.
\]

and integrated into our unified framework.

\paragraph{VF~\cite{dong2025value}.}
Value Flows (VF) combine Flow Matching with distributional critics and primarily target offline reinforcement learning.
The original implementation is written in JAX and available at:
\[
\texttt{https://github.com/chongyi-zheng/value-flows}.
\]
We re-implement the method in PyTorch based on the official code and extract only the critic component for comparison, decoupling it from the original actor design.

\paragraph{VDRL~\cite{hu2025value}.}
VDRL (Value Diffusion Reinforcement Learning) models the return distribution using diffusion models and is designed for online reinforcement learning.
At the time of submission, no official implementation is publicly available.
We implement VDRL from scratch based on the descriptions in the original paper and integrate it into our unified framework as a baseline.

\paragraph{Implementation and comparison protocol.}
To ensure a fair and consistent comparison across baselines, we adopt the following unified design principles:

\begin{itemize}
  \item \textbf{Implementation source.}
  For methods with official implementations (e.g., DSAC, TQC, VF), we base our implementations on the official code.
  For methods originally implemented in JAX (e.g., VF), we provide corresponding PyTorch re-implementations.
  All baselines are integrated into the same unified code framework.

  \item \textbf{Actor--critic decoupling.}
  For all baselines, we decouple the critic from the actor and evaluate only the critic component under a unified training and evaluation pipeline.

  \item \textbf{Target construction.}
  For entropy-regularized methods, target return samples are constructed as
  \[
  r + \gamma \bigl(Z^{\mathrm{tgt}}(s',a') - \alpha \log \pi(a' \mid s')\bigr).
  \]
  For non-entropy-based methods, target return samples are constructed as
  \[
  r + \gamma Z^{\mathrm{tgt}}(s',a').
  \]

  \item \textbf{Distributional aggregation.}
  For critics with multiple heads or multiple return distributions, we consider the following aggregation schemes:
  \begin{enumerate}
    \item Mean aggregation:
    \[
    \frac{1}{H} \sum_{h=1}^{H} \mathbb{E}[Z_h].
    \]
    \item Min aggregation:
    \[
    \mathbb{E}\bigl[\min_{h} Z_h\bigr].
    \]
    \item Drop-Top aggregation (TQC):
    truncating the largest quantile estimates before computing expectations.
  \end{enumerate}
\end{itemize}

These design choices ensure that all baselines are evaluated under consistent training signals and target semantics, isolating the effects of different distributional assumptions and generative mechanisms. We provide a concise summary of these methods in Table~\ref{tab:baseline-paradigms} to facilitate a clear comparison and highlight the key differences among them.

\input{table/critic_type}

\newpage
\section{Detailed Experimental Settings}
\label{appendix:detailed_experimental_setting}

In this section, we briefly describe the hyperparameters used for each baseline in our experiments. Since our codebase adopts a fully decoupled design for the actor and the critic, we present their hyperparameters separately. Specifically, we report the settings for the actor algorithms, QVPO, SAC, and TD3, as well as for the critic algorithms, DSAC, IQN, DBC, TQC, CDQ, VF, and VD.
A detailed summary of the actor hyperparameters is provided in Table~\ref{tab:actor_hyperparams}.

\input{table/actor_hyperparameter}

Due to the nature of distributional reinforcement learning, the sampling and aggregation schemes of distributional critics are an important part of the experimental configuration and thus are reported separately in Table~\ref{tab:distribution_hyperparameters}. In this table, we only specify the aggregation mode for combining the outputs of heads; by default, the final critic output is obtained by averaging over samples (i.e. $\frac{1}{n}\sum_{i=1}^{n} z_i$). DSAC is excluded from Table~\ref{tab:distribution_hyperparameters} because it models the return distribution as a Gaussian and directly predicts the return mean (and variance), rather than relying on multi-sample estimation and aggregation as in other distributional methods.

\input{table/critic_hyperparameter}

\input{table/distributional_hyperparameter}

Table~\ref{tab:drop_ratio} presents the configurations for DBC and TQC, which are the two algorithms adopting the DropTop aggregation scheme. Notably, our TQC's setting strictly adheres to the settings reported in its original paper.

\input{table/drop_ratio}

\section{Full Experiments}
\label{appendix:full_experiments}

\paragraph{Experimental Environment.}
All experiments were conducted on a Linux server running Ubuntu 22.04.5 LTS with kernel version 5.15.0. The machine is equipped with two Intel Xeon Platinum 8480+ CPUs, providing 224 logical cores in total and 2~TiB of system memory. For acceleration, we use 8 NVIDIA H20 GPUs, each with 96~GB of HBM memory, driven by NVIDIA driver 580.95.05. The software stack consists of Python~3.12.12 and PyTorch~2.9.1 compiled with CUDA~12.8, with CUDA runtime support verified during execution. Unless otherwise stated, all reported results are obtained under this hardware and software configuration.

Throughout this section, unless stated otherwise, we adopt the following default hyperparameters: \(M = 5\), \(w = 0.01\), and \(\theta_t = 1\).
\subsection{Detailed Results for All Baselines}
\label{appendix:full_experiments:all_baselines}

\input{table/full_experiments}

\subsection{Ablation for Different Schedules}
\label{appendix:full_experiments:schedules}

We observe that the constant and linear schedules exhibit comparable performance, whereas the cosine schedule results in substantially higher variance. Therefore, for the case of \(M=5\), we recommend using either the constant or the linear schedule.

\input{table/schedule}

\subsection{Ablation for Different Weights}
\label{appendix:full_experiments:weights}

We further investigate the effect of different values of the weight \(w\) across a wider range of tasks. Empirical results show that a small weight \(w\) (e.g., 0.01 or 0.02) consistently leads to performance improvements. However, as \(w\) increases, the performance may begin to deteriorate. For reference, when training is performed solely with the Anchor Loss, the resulting performance is generally poor. This indicates that the Anchor Loss should not be used in isolation, but rather as a weak auxiliary loss to facilitate convergence. Based on these observations, we recommend setting \(w = 0.01\).

\input{table/w2_weight_full}

\subsection{Ablation for Different Time Steps}
\label{appendix:full_experiments:timesteps}
We further examine the impact of the number of flow steps (M) across a wider range of tasks. Consistent with the results in the main text, we find that (M = 5) yields the best overall performance.

\input{table/timesteps_full}

\subsection{Training Time Across Benchmarks}
\label{appendix:full_experiments:training_time}
As shown in Table~\ref{tab:training_time}, although DBC employs a multi-step reward generation process in the critic, the resulting training-time overhead remains manageable. Compared to other baselines, DBC incurs only an approximately \(2×\) increase in training time.
\input{table/training_times}

%% file: algorithm/train.tex
\begin{algorithm}[htbp]
\caption{GOU-BQC: Training and Inference}
\label{alg:dbc-train-infer-goub}
\begin{algorithmic}[1]
\STATE \textbf{Inputs:} replay buffer $\mcD$; online parameters $\theta$; target parameters $\phi$
\STATE \textbf{Hyperparameters:} heads $H$; discount $\gamma$; temperature $\alpha$; target atoms $K_{\text{target}}$; online atoms $K$; time steps $T$; TQC drop $d$; max updates $N_{\text{step}}$; EMA coefficient $\tau_{\text{tgt}}$; Generator
\STATE \textbf{Notation:} $m\in\{0,1\}$ is the terminal mask; $\textsf{sac}=\alpha\log\pi(\boldsymbol{a}'\mid\boldsymbol{s}')$

\vspace{-0.4em}\rule{\linewidth}{0.4pt}\vspace{-0.2em}

\STATE \textbf{Part I: Training (performed for $N_{\text{step}}$ updates)}
\STATE $i \gets 0$
\WHILE{$i < N_{\text{step}}$}
  \STATE $i \gets i + 1$
  \STATE Sample $(\boldsymbol{s},\boldsymbol{a}, r, \boldsymbol{s}', m)\sim\mcD$
  % \STATE Sample $\boldsymbol{a}'\sim\pi(\cdot\mid\boldsymbol{s}')$ and obtain $\log\pi(\boldsymbol{a}'\mid\boldsymbol{s}')$
  % \STATE Set $\textsf{sac} \gets \alpha \log\pi(\boldsymbol{a}'\mid\boldsymbol{s}')$

  \STATE \textbf{(A) Target atoms (no gradient).}
  \STATE Sample $\tau' \sim U([0,1]^{K_{\text{target}}})$ and set ${z'}^{\text{start}} \gets \tau'$
  \STATE $\{z^{(h)}_{\text{target}}\}_{h=1}^H \gets \text{Generator}({z'}^{\text{start}},\boldsymbol{s}',\boldsymbol{a}',\tau', \phi)$
  \STATE Stack atoms $Z \gets \text{Concat}(z^{(1)}_{\text{target}},\dots,z^{(H)}_{\text{target}})$
  \STATE Apply truncation $Z_{\text{tqc}} \gets \text{DropTop}(Z, d\cdot H)$
  \STATE Build target samples $y_{\text{target}} \gets r + m\gamma\,(Z_{\text{tqc}}-\textsf{sac})$

  \STATE \textbf{(B) Online samples and empirical quantiles.}
  \STATE Sample $\tau \sim U([0,1]^K)$ and set $z^{\text{start}}\gets \tau$
  \STATE $\hat{z}_{\tau} \gets \text{SampleQuantile}(y_{\text{target}}, \tau)$

  \STATE \textbf{(C) Online predictions at $t=0$ and $t=t_m$.}
  \STATE Sample $t_m\sim \text{Unif}(0,1)$
  \STATE $z_{t_m} \gets \xi(t_m)\,\hat{z}^{\text{end}} + (1-\xi(t_m))\,z^{\text{start}}$

  \FOR{$h=1$ \TO $H$}
    \STATE $\hat{z}^{(h)}_{\tau,0} \gets f_{\theta,h}(z^{\text{start}},\boldsymbol{s},\boldsymbol{a},\tau,0)$
    \STATE $\hat{z}^{(h)}_{\tau,m} \gets f_{\theta,h}(z_{t_m},\boldsymbol{s},\boldsymbol{a},\tau,t_m)$
    \STATE $\mcL^{(h)} \gets \mcL\!\left(\hat{z}^{(h)}_{\tau, 0}, \hat{z}^{(h)}_{\tau, m}; \hat{z}_{\tau}, z^{\text{start}}, t_m\right)$ by Eq.~\eqref{eq:loss:total}.
  \ENDFOR

  \STATE \textbf{(D) Loss and updates.}
  \STATE $\mcL_{\text{total}} \gets \frac{1}{H}\sum_{h=1}^H \mcL^{(h)}$
  \STATE Update $\theta$ by descending $\nabla_{\theta}\mcL_{\text{total}}$
  \STATE Soft-update target parameters $\phi \leftarrow \tau_{\text{tgt}}\theta + (1-\tau_{\text{tgt}})\phi$
\ENDWHILE

\vspace{-0.2em}\rule{\linewidth}{0.4pt}\vspace{-0.2em}

\STATE \textbf{Part II: Inference (Q-value evaluation)}
\STATE \textbf{Input:} a state-action pair $(\boldsymbol{s},\boldsymbol{a})$
\STATE Sample $\tau \sim U([0,1]^K)$ and set $z^{\text{start}} \gets \tau$
\STATE $\{z^{(h)}\}_{h=1}^H \gets \text{Generator}(z^{\text{start}},\boldsymbol{s},\boldsymbol{a},\tau,\theta)$
\STATE Stack atoms $Z \gets \text{Concat}(z^{(1)},\dots,z^{(H)})$
% \STATE Apply truncation $Z_{\text{tqc}} \gets \text{DropTop}(Z, d\cdot H)$ \COMMENT{optional, consistent with training}
\STATE Output $Q(\boldsymbol{s},\boldsymbol{a}) \gets \text{Mean}(Z{})$
\end{algorithmic}
\end{algorithm}

%% file: table/critic_type.tex
\begin{table}[tb!]
\centering
\caption{Comparison of baseline methods by distributional assumption and generative mechanism.}
\small
\begin{tabular}{lcccc}
\toprule
Method 
& Distribution Assumption 
& Representation 
& Generative Mechanism 
& Aggregation \\ 
\midrule
CDQ 
& Point mass 
& Scalar $Q(s,a)$ 
& Deterministic evaluation
& Min \\

DSAC 
& Gaussian 
& $(\mu,\sigma)$ 
& Gaussian sampling 
& Mean \\

IQN 
& Quantile 
& $F^{-1}(\tau)$ 
& Quantile sampling  
& Mean/Min \\

TQC 
& Quantile 
& \(F^{-1}(\tau_i)\) 
& Quantile sampling 
& Drop-Top \\

VF 
& Flow 
& Velocity field 
& Flow-based sampling
& Mean/Min \\

VDRL 
& Diffusion 
& noise predictor 
& Reverse diffusion 
& Mean/Min \\

DBC (Ours)
& Diffusion Bridge
& End Point $\hat{x}_1(\tau, t)$
& ODE bridge sampling   
& Drop-Top\\

\bottomrule
\end{tabular}
\label{tab:baseline-paradigms}
\end{table}

%% file: table/actor_hyperparameter.tex
\begin{table}[htbp]
\centering
\small
\caption{Actor-side hyperparameters for QVPO, SAC, and TD3. Here, "--" indicates that the corresponding actor does not involve this hyperparameter.}
\label{tab:actor_hyperparams}
\begin{tabular}{lccc}
\toprule
\textbf{Parameter} 
& \textbf{QVPO} 
& \textbf{SAC} 
& \textbf{TD3} \\
\midrule

\multicolumn{4}{l}{\textit{Base Settings}} \\
Discount factor $\gamma$ 
& $0.99$ & $0.99$ & $0.99$ \\
Target update rate $\tau$ 
& $0.005$ & $0.005$ & $0.005$ \\

\midrule
\multicolumn{4}{l}{\textit{Actor Network Architecture}} \\
Hidden units 
& $256$ 
& $256$ 
& $256$ \\
Time embedding dim 
& $32$ 
& -- 
& -- \\
Hidden layers
& $3$
& $3$
& $3$ \\
Activation layer
& Mish
& ReLU
& ReLU\\
\midrule
\multicolumn{4}{l}{\textit{Diffusion Parameterization}} \\
Number of diffusion steps $T$ 
& $20$ 
& -- 
& -- \\
Beta schedule 
& Cosine 
& -- 
& -- \\
Predict noise $\epsilon$ 
& True 
& -- 
& -- \\
Noise ratio 
& $1.0$ 
& -- 
& -- \\

\midrule
\multicolumn{4}{l}{\textit{Optimization (Actor Only)}} \\
Actor learning rate 
& $1\times 10^{-4}$ 
& $3\times 10^{-4}$ 
& $3\times 10^{-4}$ \\
Gradient clipping 
& $2.0$ 
& $1.0$ 
& $1.0$ \\

\midrule
\multicolumn{4}{l}{\textit{Sampling / Policy Evaluation}} \\
Train samples per state 
& $64$ 
& -- 
& -- \\
Eval samples per state 
& $32$ 
& -- 
& -- \\

\midrule
\multicolumn{4}{l}{\textit{Regularization / Entropy}} \\
Entropy coefficient $\alpha$ 
& $0.02$ 
& $1$
& -- \\
Minimum $\alpha$ 
& $0.002$ 
& -- 
& -- \\

\midrule
\multicolumn{4}{l}{\textit{Update Frequency}} \\
Policy Delay
& $1$ 
& $1$ 
& $2$ \\

\bottomrule
\end{tabular}
\end{table}

%% file: table/critic_hyperparameter.tex
\begin{table}[htbp]
\centering
\small
\caption{Common critic hyperparameters across methods.}
\label{tab:critic_hyperparameters}
\setlength{\tabcolsep}{4pt}
\begin{tabular}{lccccccc}
\toprule
\textbf{Parameter} &
\textbf{DSAC-V1} &
\textbf{IQN} &
\textbf{DBC} &
\textbf{TQC} &
\textbf{CDQ} &
\textbf{ValueFlows} &
\textbf{VDRL} \\
\midrule
Network Structure &
MLP &
IQN &
MLP &
MLP &
MLP &
MLP &
MLP \\
Hidden width &
$256$ & $256$ & $512$ & $512$ & $512$ & $512$ & $256$ \\
Activation &
GELU &
ReLU &
ReLU &
ReLU &
ReLU &
ReLU &
Mish \\
Number of heads &
1 &
2 &
2 &
2 &
2 &
2 &
2 \\
Optimizer &
Adam &
Adam &
Adam &
Adam &
Adam &
Adam &
Adam \\
Learning rate &
$3\times 10^{-4}$ &
$3\times 10^{-4}$ &
$3\times 10^{-4}$ &
$3\times 10^{-4}$ &
$3\times 10^{-4}$ &
$3\times 10^{-4}$ &
$3\times 10^{-4}$ \\
Adam $\epsilon$ &
$10^{-5}$ &
$10^{-5}$ &
$10^{-5}$ &
$10^{-5}$ &
$10^{-5}$ &
$10^{-5}$ &
$10^{-8}$ \\
Gradient clip (max norm) &
$1.0$ &
$1.0$ &
$1.0$ &
$1.0$ &
$1.0$ &
$1.0$ &
$1.0$ \\
Discount $\gamma$ &
$0.99$ &
$0.99$ &
$0.99$ &
$0.99$ &
$0.99$ &
$0.99$ &
$0.99$ \\
Target Polyak $\tau$ &
$0.005$ &
$0.005$ &
$0.005$ &
$0.005$ &
$0.005$ &
$0.005$ &
$0.005$ \\
\bottomrule
\end{tabular}
\end{table}

%% file: table/distributional_hyperparameter.tex
\begin{table}[htbp]
\centering
\caption{Comparison of online/target sampling counts and aggregation strategies. IQN aggregates via minimum over sampled quantiles; TQC and DBC apply top value truncation, with DBC using asymmetric online/target sampling. Here, "--" indicates that the corresponding critic does not involve this hyperparameter.}
\small
\setlength{\tabcolsep}{6pt}
\begin{tabular}{lcccc}
\toprule
\textbf{Method} &
\textbf{Aggregation} &
\textbf{Online Samples} &
\textbf{Target Samples} &
\textbf{Embedding Dimension} \\
\midrule
VF &
Mean &
$1$ &
$1$ &
$1$ \\
VDRL &
Min &
$5$ &
$10$ &
$16$ \\
IQN &
Min &
$32$ &
$32$ &
$64$ \\
TQC &
Drop-top &
$25$ &
$25$ &
--- \\
DBC &
Drop-top &
$64$ &
$128$ &
$32$ \\
\bottomrule
\end{tabular}
\label{tab:distribution_hyperparameters}
\end{table}

%% file: table/drop_ratio.tex
\begin{table}[htbp]
\centering
\small
\caption{Drop ratios for DBC and TQC across environments. TQC drop settings strictly follow the original TQC paper, while DBC uses environment-specific truncation ratios under a higher-resolution target distribution.}
\label{tab:drop_ratio}
\begin{tabular}{lcc}
\toprule
\textbf{Environment} &
\textbf{DBC (Drop / Total)} &
\textbf{TQC (Drop / Total)} \\
\midrule
HalfCheetah-v5 &
$0 / 128$ &
$0 / 25$ \\
Ant-v5 &
$12 / 128$ &
$2 / 25$ \\
Walker2d-v5 &
$14 / 128$ &
$2 / 25$ \\
Humanoid-v5 &
$12 / 128$ &
$2 / 25$ \\
Hopper-v5 &
$32 / 128$ &
$5 / 25$ \\
\bottomrule
\end{tabular}
\end{table}

%% file: table/full_experiments.tex
\begin{table}[htbp]
\caption{Experiments Performance for all baselines}
\label{tab:full_experiments}
\begin{center}
\begin{small}
\begin{tabular}{lccccc}
\toprule
Run & Ant-v5 & HalfCheetah-v5 & Hopper-v5 & Humanoid-v5 & Walker2d-v5 \\
\midrule
QVPO+CDQ & 6373.2 \(\pm\) 65.0 & 12103.9 \(\pm\) 422.3 & 3582.0 \(\pm\) 48.1 & 5068.3 \(\pm\) 28.5 & 4986.3 \(\pm\) 279.6 \\
SAC+CDQ & 6121.8 \(\pm\) 412.7 & 12776.4 \(\pm\) 285.2 & 3630.5 \(\pm\) 50.0 & 5207.1 \(\pm\) 94.1 & 4854.4 \(\pm\) 372.8 \\
SAC+DSAC & 2013.2 \(\pm\) 679.9 & 12974.0 \(\pm\) 191.7 & 3522.9 \(\pm\) 23.5 & 3442.6 \(\pm\) 216.8 & 3297.4 \(\pm\) 1248.3 \\
SAC+IQN & 3487.4 \(\pm\) 798.5 & 10930.2 \(\pm\) 152.7 & 3301.8 \(\pm\) 360.1 & 4729.2 \(\pm\) 580.5 & 4766.5 \(\pm\) 255.7 \\
SAC+TQC & 6342.6 \(\pm\) 294.7 & \textbf{13996.2 \(\pm\) 191.7} & 3704.6 \(\pm\) 94.5 & 5269.0 \(\pm\) 113.8 & 5802.6 \(\pm\) 234.2 \\
SAC+VD & 3850.4 \(\pm\) 987.1 & 4284.8 \(\pm\) 478.2 & 1052.7 \(\pm\) 20.4 & 4886.9 \(\pm\) 228.8 & 2533.9 \(\pm\) 302.4 \\
SAC+VF & 2650.3 \(\pm\) 163.8 & 8199.5 \(\pm\) 1533.3 & 3310.4 \(\pm\) 24.6 & 4950.3 \(\pm\) 60.3 & 2626.4 \(\pm\) 1387.4 \\
TD3+CDQ & 4257.0 \(\pm\) 1711.6 & 11679.0 \(\pm\) 629.8 & 3596.2 \(\pm\) 38.8 & 5067.8 \(\pm\) 33.7 & 5093.7 \(\pm\) 439.8 \\
\midrule
SAC+DBC(*) & 6501.4 \(\pm\) 84.3 & 13120.2 \(\pm\) 162.4 & \textbf{3732.5 \(\pm\) 83.2} & \textbf{5906.7 \(\pm\) 198.6} & 6138.2 \(\pm\) 38.1 \\
QVPO+DBC(*) & \textbf{6633.8 \(\pm\) 71.2} & 13182.1 \(\pm\) 252.4 & 3721.3 \(\pm\) 116.6 & 5426.3 \(\pm\) 9.0 & 5448.1 \(\pm\) 193.8 \\
TD3+DBC(*) & 5306.0 \(\pm\) 1303.6 & 13787.8 \(\pm\) 346.5 & 3685.7 \(\pm\) 68.2 & 5343.2 \(\pm\) 199.3 & \textbf{6335.5 \(\pm\) 269.4} \\
\bottomrule
\end{tabular}
\end{small}
\end{center}
\end{table}

%% file: table/schedule.tex
\begin{table}[htbp]
\caption{Ablation for different schedules}
\label{tab:ablation:schedule}
\begin{center}
\begin{small}
\begin{tabular}{lccccc}
\toprule
Schedule & Ant-v5 & HalfCheetah-v5 & Hopper-v5 & Humanoid-v5 & Walker2d-v5 \\
\midrule
Constant & 6501.4 \(\pm\) 84.3 & \textbf{13120.2 \(\pm\) 162.4} & 3732.5 \(\pm\) 83.2 & \textbf{5906.7 \(\pm\) 198.6} & 6138.2 \(\pm\) 38.1 \\
Cosine & 5358.3 \(\pm\) 1628.9 & 12538.9 \(\pm\) 266.1 & \textbf{3748.6 \(\pm\) 144.4} & 5623.8 \(\pm\) 368.7 & \textbf{6326.6 \(\pm\) 751.9} \\
Linear & \textbf{6544.5 \(\pm\) 113.9} & 13004.6 \(\pm\) 174.7 & 3728.6 \(\pm\) 50.9 & 5594.4 \(\pm\) 305.7 & 6116.7 \(\pm\) 517.3 \\
\bottomrule
\end{tabular}
\end{small}
\end{center}
\end{table}

%% file: table/w2_weight_full.tex
\begin{table}[htbp]
\caption{Sensitivity Analysis of Anchor Loss Weight \(w\)}
\label{tab:ablation:weight_full}
\vskip 0.15in
\begin{center}
\begin{small}
\begin{tabular}{lccccc}
\toprule
 & Ant-v5 & HalfCheetah-v5 & Hopper-v5 & Humanoid-v5 & Walker2d-v5 \\
\midrule
\(w=0.0\) & 4828.3 \(\pm\) 1439.0 & 12990.1 \(\pm\) 719.7 & 3646.3 \(\pm\) 84.9 & 5807.3 \(\pm\) 175.8 & 5562.9 \(\pm\) 423.2 \\
\(w=0.01\) & \textbf{6501.4 \(\pm\) 84.3} & 13120.2 \(\pm\) 162.4 & \textbf{3748.6 \(\pm\) 144.0} & \textbf{5906.7 \(\pm\) 198.6} & \textbf{6138.2 \(\pm\) 38.1} \\
\(w=0.02\) & 3947.3 \(\pm\) 2151.8 & \textbf{13165.8 \(\pm\) 301.2} & 3702.6 \(\pm\) 86.5 & 5743.4 \(\pm\) 301.1 & 6097.8 \(\pm\) 375.1 \\
\(w=0.1\) & 5716.2 \(\pm\) 741.3 & 12775.5 \(\pm\) 127.8 & 3591.1 \(\pm\) 79.2 & 5589.0 \(\pm\) 50.3 & 5974.6 \(\pm\) 603.2 \\
No Eq.~\eqref{eq:loss:quantile_loss} & 2517.1 \(\pm\) 2464.7 & 12393.1 \(\pm\) 314.9 & 3244.0 \(\pm\) 497.5 & 3751.2 \(\pm\) 321.1 & 3859.3 \(\pm\) 1828.7 \\
\bottomrule
\end{tabular}

\end{small}
\end{center}
\end{table}

%% file: table/timesteps_full.tex
\begin{table}[htbp]
\caption{Ablation of Flow Steps \(M\)}
\label{tab:ablation:timesteps_full}
\vskip 0.15in
\begin{center}
\begin{small}
\begin{tabular}{lccccc}
\toprule
 & Ant-v5 & HalfCheetah-v5 & Hopper-v5 & Humanoid-v5 & Walker2d-v5 \\
\midrule
M=1 & 5150.4 \(\pm\) 2226.3 & 12821.5 \(\pm\) 26.4 & \textbf{3748.1 \(\pm\) 46.8} & 5524.7 \(\pm\) 132.7 & 5809.8 \(\pm\) 308.6 \\
M=5 & \textbf{6501.4 \(\pm\) 84.3} & \textbf{13120.2 \(\pm\) 162.4} & \textbf{3748.6 ± 144.4} & \textbf{5906.7 \(\pm\) 198.6} & \textbf{6138.2 \(\pm\) 38.1} \\
M=10 & 5946.6 \(\pm\) 379.7 & 13066.6 \(\pm\) 203.0 & 3607.9 \(\pm\) 30.5 & 4848.7 \(\pm\) 88.1 & 6014.3 \(\pm\) 561.6 \\
\bottomrule
\end{tabular}
\end{small}
\end{center}
\end{table}

%% file: table/training_times.tex
\begin{table}[h!]
\centering
\caption{
Training time (in hours) required by different critics to complete 1M environment steps.
All experiments use SAC as the actor backbone and are evaluated on MuJoCo v5 benchmarks.
}
\label{tab:training_time}
\small
\begin{tabular}{lccccc}
\toprule
\textbf{Critic} & \textbf{Ant-v5} & \textbf{HalfCheetah-v5} & \textbf{Hopper-v5} & \textbf{Walker2d-v5} & \textbf{Humanoid-v5} \\
\midrule
CDQ  & 3.0 & 3.0 & 3.0 & 3.0 & 3.5 \\
TQC  & 3.1 & 3.0 & 3.3 & 3.3 & 3.5 \\
IQN  & 4.0 & 4.0 & 4.0 & 4.0 & 3.8 \\
DBC  & 7.0 & 7.0 & 7.0 & 7.0 & 8.0 \\
\bottomrule
\end{tabular}
\end{table}